\def\eqref#1{equation~\ref{#1}}
\def\1{\bm{1}}
\DeclareMathAlphabet{\mathsfit}{\encodingdefault}{\sfdefault}{m}{sl}
\SetMathAlphabet{\mathsfit}{bold}{\encodingdefault}{\sfdefault}{bx}{n}
\newcommand{\smodel}{\emph{GOAt}}
\newcommand{\fmodel}{\textbf{G}raph \textbf{O}utput \textbf{At}tribution}
\newtheorem{theorem}{Theorem}
\newtheorem{lemma}[theorem]{Lemma}
\newtheorem{define}[theorem]{Definition}
\newcommand{\red}[1]{\textcolor{red}{#1}}
\newcommand{\black}[1]{\textcolor{black}{#1}}
\definecolor{dgreen}{RGB}{0, 180, 0}
\newmdtheoremenv{theo}{Theorem}
\title{\emph{GOAt}: Explaining Graph Neural Networks via\\ Graph Output Attribution}
\author{Shengyao Lu$^1$, Keith G. Mills$^1$, Jiao He$^{2}$, Bang Liu$^{3}$, Di Niu$^1$ \\
$^1$Department of Electrical and Computer Engineering, University of Alberta\\
$^2$Kirin AI Algorithm \& Solution, Huawei
$^3$DIRO, Universit{\'e} de Montr{\'e}al \& Mila\\
$\{$\texttt{shengyao,kgmills,dniu}$\}$\texttt{@ualberta.ca}\\
\texttt{hejiao4@huawei.com,bang.liu@umontreal.ca}
}
\begin{document}

\maketitle
\begin{abstract}
  Understanding the decision-making process of Graph Neural Networks (GNNs) is crucial to their interpretability. 
  Most existing methods for explaining GNNs typically rely on training auxiliary models, resulting in the explanations remain black-boxed. 
  This paper introduces Graph Output Attribution (GOAt), a novel method to attribute graph outputs to input graph features, creating GNN explanations that are faithful, discriminative, as well as stable across similar samples. By expanding the GNN as a sum of scalar products involving node features, edge features and activation patterns, we propose an efficient analytical method to compute contribution of each node or edge feature to each scalar product and aggregate the contributions from all scalar products in the expansion form to derive the importance of each node and edge.
  Through extensive experiments on synthetic and real-world data, we show that our method not only outperforms various state-of-the-art GNN explainers in terms of the commonly used fidelity metric, but also exhibits stronger discriminability, and stability by a remarkable margin. Code can be found at: \href{https://github.com/sluxsr/GOAt}{\red{\texttt{https://github.com/sluxsr/GOAt}}}. 
\end{abstract}

\section{Introduction}
\label{sec:intro}

Graph Neural Networks (GNNs) have demonstrated notable success in learning representations from graph-structured data in various fields~\citep{kipf2017semisupervised,hamilton2017inductive,xupowerful}. However, their black-box nature has driven the need for explainability, especially in sectors where transparency and accountability are essential, such as finance~\citep{wang2021temporal}, healthcare~\citep{amann2020explainability}, and security~\citep{pei2020amalnet}. The ability to interpret GNNs can provide insights into the mechanisms underlying deep models and help establish trustworthy predictions.

Existing attempts to explain GNNs usually focus on local-level or global-level explainability. Local-level explainers \citep{ying2019gnnexplainer, luo2020parameterized, schlichtkrull2020interpreting, vu2020pgm, huang2022graphlime, lin2021generative, shan2021reinforcement, bajaj2021robust} typically train a secondary model to identify the critical graph structures that best explain the behavior of a pretrained GNN for specific input instances.
Global-level explainers~\citep{azzolin2023global, huang2023global} perform prototype learning or random walk on the explanation instances to extract the global concepts or counterfactual rules over a multitude of graph samples. While global explanations can be robust, local explanations are more intuitive and fine-grained. 

In this paper, we introduce a computationally efficient local-level GNN explanation technique called {\fmodel} ({\smodel}) to overcome the limitations of existing methods. Unlike methods that rely on back-propagation with gradients \citep{pope2019explainability, baldassarre2019explainability, schnake2021higher, feng2022degree} and those relyinig on hyper-parameters or training complex black-box models \citep{luo2020parameterized, lin2021generative, shan2021reinforcement, bajaj2021robust}, our approach enables attribution of GNN output to input features, leveraging the repetitive sum-product structure in the forward pass of a GNN.

Given that the matrix multiplication in each GNN layer adheres to linearity properties and the activation functions operate element-wise, a GNN can be represented in an expansion form as a sum of scalar product terms, involving input graph features, model parameters, as well as \emph{activation patterns} that indicate the activation levels of the scalar products. 
Based on the notion that all scalar variables $X_i$ in a scalar product term $g = cX_1X_2\dots X_N$ contribute equally to $g$, where $c$ is a constant, we can attribute each product term to its corresponding factors and thus to input features, obtaining the importance of each node or edge feature in the input graph to GNN outputs.  We present case studies that demonstrate the effectiveness of our analytical explanation method {\smodel} on typical GNN variants, including GCN, GraphSAGE, and GIN. 

Besides the fidelity metric, which is commonly used to assess the faithfulness of GNN explanations, we introduce \black{two new metrics} to evaluate the \textit{discriminability} and \textit{stability} of the explanation, which are under-investigated by prior literature. Discriminability refers to the ability of explanations to distinguish between classes, which is assessed by the difference between the mean explanation embeddings of different classes, while stability refers to the ability to generate consistent explanations across similar data instances, which is measured by the percentage of data samples covered by top-$k$ explanations.
Through comprehensive experiments based on on both synthetic and real-world datasets along with qualitative analysis, we show the outstanding performance of our proposed method, {\smodel}, in providing highly faithful, discriminative, and stable explanations for GNNs, as compared to a range of state-of-the-art methods. 

\section{Problem Formulation}
\label{sec:problem}

\textbf{Graph Neural Networks.} Let $G=(\mathcal{V},\mathcal{E})$ be a graph, where $\mathcal{V}=\{ v_1, v_2, \dots, v_N \}$ denotes the set of nodes and $\mathcal{E} \subseteq \mathcal{V} \times \mathcal{V}$ denotes the set of edges. The node feature matrix of the graph is represented by $X \in \mathbb{R}^{N \times d}$, and the adjacency matrix is represented by $A \in \{0,1\}^{N \times N}$ such that $A_{ij}=1$ if there exists an edge between nodes $v_i$ and $v_j$.
The task of a GNN is to learn a function $f(G)$, which maps the input graph $G$ to a target output, such as node labels, graph labels, or edge labels. Formally speaking, for a given GNN, the hidden state $h_i^{(l)}$ of node $v_i$ at its layer $l$ can be represented as:
\begin{equation} \label{eq:gnn_eq}
    \small
    h_i^{(l)} = \mathrm{COMBINE}^{(l)}\left \{ h_i^{(l-1)},  \mathrm{AGGREGATE}^{(l)} \left ( \left\{ h_j^{(l-1)} , \forall v_j \in \mathcal{N}_i\right\}\right )\right \},
\end{equation}
where $\mathcal{N}_i$ represents the set of neighbors of node $v_i$ in the graph. $\mathrm{COMEBINE} ^{(l)} (\cdot)$ is a COMBINE function such as concatenation~\citep{hamilton2017inductive}, while $\mathrm{AGGREGATE}^{(l)}(\cdot)$ are AGGREGATE functions with aggregators such as $\mathrm{ADD}$. We focus on GNNs that adopt the non-linear activation function ReLU in COMBINE or AGGREGATE functions. 

\textbf{Local-level GNN Explainability.} Our goal is to generate a faithful explanation for each graph instance $G=(\mathcal{V},\mathcal{E})$ by identifying a subset of edges $S \subseteq \mathcal{E}$, which are important to the predictions, given a GNN $f(\cdot)$ pretrained on a set of graphs $\mathcal{G}$.
We highlight edges instead of nodes as suggested by~\citep{faber2021comparing} that edges have more fine-grained information than nodes while giving human-understandable explanations like subgraphs. 

\section{Method}
\label{sec:method}

The fundamental idea of our approach can be summarized as two key points: i) Equal Contribution of product terms; ii) Expanding the output representation to product terms. Before diving into the mathematical definitions, we briefly describe them by toy examples as following.

\begin{figure*}[t]
\vskip -0.4in
\begin{center}
\centerline{\includegraphics[width=0.9\textwidth]{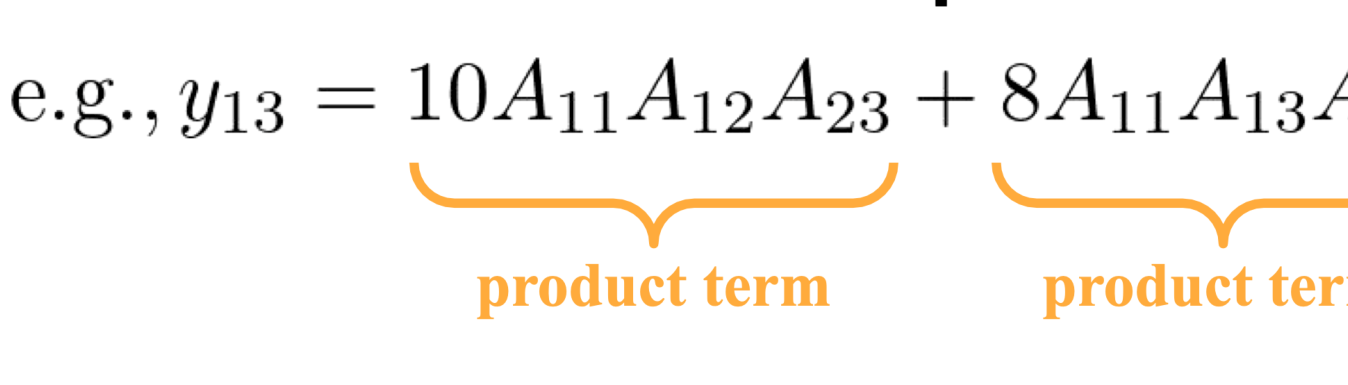}}
\vskip -0.1in
\caption{Illustrative example of GOAt. } 
\label{fig:toy_example}
\end{center}
\vskip -0.35in
\end{figure*}

\textbf{Equal Contribution.} 
Given a product term $z=10A_{11}A_{12}A_{23}$, where the variables $A_{11}=A_{12}=A_{23}=1$ indicate that all three edges exist, resulting in $z=10$. If any of the edges is missing, i.e., at least one of $A_{11},A_{12},A_{23}$ is $0$, then $z=0$. This fact implies that the presence of all edges is equally essential for the resulting value of $z=0$. Therefore, each of the three edges contributes $1/3$ to the output, resulting in an attribution of $10/3$.

\textbf{Expand the output representation.} 
The output matrix of a GNN can be described as the outcome of a linear transformation involving the input matrices and the GNN parameters. Since the GNNs are pretrained, the parameters $W,B$ are fixed, hence can be treated as constants. As a result, each element within the output matrix can be represented as the sum of scalar products that involve entries from the inputs. Consequently, we can determine the attribution of an edge, by summing its contribution across all the scalar products in which it participates. For example, as illustrated in Figure~\ref{fig:toy_example}, if the expansion form of an element in the output matrix is $y_{13}=10A_{11}A_{12}A_{23}+8A_{11}A_{13}A_{33}+11A_{12}A_{21}A_{13}$, then the attribution of $A_{11}$ to $y_{13}$ is $10/3+8/3=6$ since $A_{11}$ does not participate in the term $11A_{12}A_{21}A_{13}$. 

The rest of this section begins by presenting our fundamental definition of equal contribution in a product term. Then, we mathematically present {\smodel} method for explaining typical GNNs, followed by a case study on GCN~\citep{kipf2017semisupervised}. Additional case studies of applying {\smodel} to GraphSAGE~\citep{hamilton2017inductive} and GIN~\citep{xupowerful} are included in Appendix~\ref{app:sage} and \ref{app:gin}. 
The code implementing our algorithm is included in the Supplementary Material of our submission.

\subsection{Equal Contribution}
\label{sec:overview}

\begin{define}[\textbf{Equal Contribution}]\label{def:equal_imp}
    Given a function $g(\mathbf{X})$ where $\mathbf{X}=\{X_1, \dots, X_M\}$ represents $M$ variables, we say that variables $X_i$ and $X_j$ have equal contribution to function $g$ at $(x_i,x_j)$ with respect to the base manifold at $(x_i^\prime,x_j^\prime)$ if and only if setting $X_i=x_i, X_j=x_j^{\prime}$ and setting ${X_i=x_i^{\prime}, X_j=x_j}$ yield the same manifold, i.e.,
    \begin{equation*}
        g_{X_i=x_i, X_j=x_j^{\prime}}(\mathbf{X} \backslash \{X_i,X_j\}) = g_{X_i=x_i^{\prime}, X_j=x_j}(\mathbf{X} \backslash \{X_i,X_j\})
    \end{equation*}
    for any values of $\mathbf{X}$ excluding $X_i$ and $X_j$. 
\end{define}

\begin{lemma}[\textbf{Equal Contribution in a product}]\label{lemma:equal_imp}
    Given a function $g(\mathbf{X})$ defined as $g(\mathbf{X})=b\prod_{k=1}^{M}{X_k}$, where $b$ is a constant, and $\mathbf{X}=\{X_1, \dots, X_M\}$ represents $M$ uncorrelated variables. Each variable $X_k$ is either $0$ or $x_k$, depending on the absence or presence of a certain feature. Then, all the variables in $\mathbf{X}$ contribute equally to $g(\mathbf{X})$ at $[x_1, \dots, x_M]$ with respect to $[0, \dots, 0]$.
\end{lemma}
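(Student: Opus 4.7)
The plan is to verify Definition~1 directly for an arbitrary pair of variables $X_i, X_j$ in the product $g(\mathbf{X}) = b\prod_{k=1}^M X_k$, and then invoke the symmetry across all pairs to conclude equal contribution for all $M$ variables. The key observation is that the baseline value for every variable is $0$, and any product containing a $0$ factor vanishes identically, so the two ``mixed'' substitutions in the definition collapse to the same constant-zero function of the remaining variables.

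Concretely, I would fix indices $i \neq j$ and compute both sides of the defining identity. Setting $X_i = x_i$ and $X_j = 0$ yields
\begin{equation*}
g_{X_i = x_i,\, X_j = 0}(\mathbf{X}\setminus\{X_i,X_j\}) \;=\; b \cdot x_i \cdot 0 \cdot \prod_{k \neq i, j} X_k \;=\; 0,
\end{equation*}
and symmetrically setting $X_i = 0, X_j = x_j$ also produces the identically zero function of the remaining variables. Since both sides coincide (as the zero function on $\mathbf{X}\setminus\{X_i,X_j\}$) for every choice of values of the remaining variables, the equal-contribution identity of Definition~1 holds at $(x_i, x_j)$ with respect to the baseline $(0, 0)$.

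Because the indices $i, j$ were arbitrary, the same argument applies to every pair of variables in $\mathbf{X}$, which establishes that all $M$ variables contribute equally to $g$ at $[x_1,\dots,x_M]$ with respect to $[0,\dots,0]$. The role of the uncorrelatedness hypothesis is simply to ensure that the substitutions $X_i = 0$ and $X_j = 0$ can be performed independently without implicitly constraining the other variables, and the constant $b$ plays no role beyond rescaling both zero functions.

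There is essentially no technical obstacle here: the lemma is an immediate consequence of the annihilating property of $0$ in a product. The only subtlety worth flagging in the write-up is notational, namely that Definition~1's ``manifold'' at a substitution means the induced function on the remaining $M-2$ variables, so that equality of manifolds is pointwise equality of these residual functions; once this is made explicit, the proof reduces to the two-line computation above.
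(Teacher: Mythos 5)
Your proof is correct and matches the paper's argument in substance: both rest on the observation that either mixed substitution ($X_i = x_i, X_j = 0$ or $X_i = 0, X_j = x_j$) annihilates the product, making both residual functions identically zero, with uncorrelatedness invoked to guarantee the substitutions can be made independently. The paper merely phrases this as a proof by contradiction (also separately addressing feasibility of the simultaneous assignments), whereas you argue directly --- a cosmetic difference, not a different route.
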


Proofs of all Lemmas and Theorems can be found in the Appendix. 
Since all the binary variables have equal contribution, we define the contribution of each variable \black{$X_k$ to $g(\mathbf{X})=b\prod_{k=1}^{M}{X_k}$} for all $k = 1,\ldots,M$, as
\begin{equation}\label{eq:eq_contri}
    \black{I_{X_k}=\frac{b\prod_{i=1}^{M}{x_i}}{M}.}
\end{equation}

\subsection{Explaining Graph Neural Networks via Attribution} 
\label{exp:gnn}
A typical GNN~\citep{kipf2017semisupervised, hamilton2017inductive, xupowerful} for node or graph classification tasks usually comprises 2-6 message-passing layers for learning node or graph representations, followed by several fully connected layers that serve as the classifier. 
With the hidden state $h_i^{(l)}$ of node $v_i$ at the $l$-th message-passing layer defined as Equation (\ref{eq:gnn_eq}), we generally have the hidden state $H^{(l)}$ of a data sample as:
\begin{equation} \label{eq:gnn_mlp}
    \small 
    H^{(l)} = \sigma \left(\Phi^{(l)}\left (\left(A + \epsilon^{(l)}I\right) H^{(l-1)} \right) + \lambda \Psi^{(l)}\left( H^{(l-1)} \right) \right),
\end{equation}
where $A$ is the adjacency matrix, $\epsilon^{(l)}$ refers to the self-loop added to the graph if fixed to $1$, otherwise it is a learnable parameter, $\sigma(\cdot)$ is the element wise activation function, $\Phi^{(l)}$ and $\Psi^{(l)}$ can be Multilayer Perceptrons (MLP) or linear mappings, $\lambda\in\{0,1\}$ determines whether a concatenation is required. If the COMBINE step of a GNN requires a concatenation, we have $\lambda=1$ and $\epsilon^{(l)}=1$; if the COMBINE step requires a weighted sum, we have $\epsilon^{(l)}$ set trainable and $\lambda=0$. 
Alternatively, Equation (\ref{eq:gnn_mlp}) can be expanded to:
\begin{equation}\label{eq:layer_decom}
    \small
    H^{(l)} = \sigma \left(AH^{(l-1)}\prod_{k=1}^{K}{W^{\Phi^{(l)}_k}} + \epsilon^{(l)}H^{(l-1)} \prod_{k=1}^{K}{W^{\Phi^{(l)}_k}} + \lambda H^{(l-1)}\prod_{q=1}^{Q}{W^{\Psi^{(l)}_{q}}}  \right),
\end{equation}
where $K, Q$ refer to the number of MLP layers in $\Phi^{(l)}(\cdot)$ and $\Psi^{(l)}(\cdot)$, and $W^{\Phi^{(l)}_k}$ and $W^{\Psi^{(l)}_{q}}$ are the trainable parameters in $\Phi^{(l)}_k$ and $\Psi^{(l)}_q$.

Given a certain data sample and a pretrained GNN, for an element-wise activation function we can define the activation pattern as the ratio between the output and input of the activation function: 
\begin{define}[\textbf{Activation Pattern}]
    \label{def:act_p}
    Denote $H^{(l)\prime}$ and $H^{(l)}$ as the hidden representations before and after passing through the element-wise activation function at the $l$-th layer, we define activation pattern $P^{(l)}$ for a given data sample as
    \begin{equation*}
        \small
        P^{(l)}_{i,j} = \begin{cases}
        \frac{H^{(l)}_{i,j}}{H^{(l)\prime}_{i,j}}, & \text{if } H^{(l)\prime}_{i,j} \neq 0\\
        0, & \text{otherwise}
        \end{cases}
    \end{equation*}
    where $P^{(l)}_{i,j}$ is the element-wise activation pattern for the $j$-th feature of $i$-th node at layer $l$.
\end{define}
Hence, the hidden state $H^{(l)}$ at the $l$-th layer for a given sample can be written as
\begin{equation}\label{eq:gnn_pattern_complex}
    H^{(l)} = P^{(l)} \odot \left(AH^{(l-1)}\prod_{k=1}^{K}{W^{\Phi^{(l)}_k}} + \epsilon^{(l)}H^{(l-1)} \prod_{k=1}^{K}{W^{\Phi^{(l)}_k}} + \lambda H^{(l-1)}\prod_{q=1}^{Q}{W^{{\Psi}^{(l)}_q}}  \right), 
\end{equation}
where $\odot$ represents element-wise multiplication. Thus, we can expand the expression of each output entry in a GNN $f(A, X)$ into a sum of scalar products, where each scalar product is the multiplication of corresponding entries in $A$, $X$, $W$, and $P$ in all layers.
Then each scalar product can be written as
\begin{equation}\label{eq:z_eq}
    \begin{split}
        z = \mathbb{C} \cdot & \left(P^{(1)}_{\alpha_{10},\beta_{11}}\dots P^{(L)}_{\alpha_{L0},\beta_{L1}} \right) \left(P^{(\mathrm{c}_1)}_{\alpha_{L0},\gamma_{11}}\dots P^{(\mathrm{c}_{(M-1)})}_{\alpha_{L0},\gamma_{(M-1)1}} \right) \cdot \\
        & \left(A^{(L)}_{\alpha_{L0},\alpha_{L1}} \dots A^{(1)}_{\alpha_{10},\alpha_{11}} \right) X_{i,j} \left(W^{(1)}_{\beta_{10},\beta_{11}}\dots W^{(L)}_{\beta_{L0},\beta_{L1}} \right) \left(W^{(\mathrm{c}_1)}_{\gamma_{10},\gamma_{11}}\dots W^{(\mathrm{c}_M)}_{\gamma_{M0},\gamma_{M1} }\right), 
    \end{split}
\end{equation}
where $\mathbb{C}$ is a constant, $c_k$ refers to the $k$-th layer of the classifier, $(\alpha_{l0},\alpha_{l1}), (\beta_{l0},\beta_{l1}), (\gamma_{l0},\gamma_{l1})$ are (\textit{row, column}) indices of the corresponding matrices at layer $l$. 
In a classifier with $M$ MLP layers, only $(M-1)$ layers adopt activation functions. Therefore, we do not have $P^{(\mathrm{c}_{M})}_{\alpha_{L0},\gamma_{M1}}$ in Equation~(\ref{eq:z_eq}).
For scalar products without factors of $A$, all $A$'s are considered as constants equal to $1$ in Equation~(\ref{eq:z_eq}). 
Since the GNN model parameters are pretrained and fixed, we only consider $A$, $X$, and all the $P$ terms as the variables in each product term, where $A$ is the dominant one as we explained in Section~\ref{sec:problem}. 

\begin{lemma}[\textbf{Equal Contribution variables in the GNN expansion form's scalar product}] \label{lemma:gnn_indep}
     For a scalar product term $z$ in the expansion form of a pretrained GNN $f(\cdot)$, when the number of nodes $N$ is large, all variables in $z$ have equal contributions to the scalar product $z$. 
\end{lemma}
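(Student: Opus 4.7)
The plan is to reduce Lemma~\ref{lemma:gnn_indep} to Lemma~\ref{lemma:equal_imp} by (i) separating the factors of Equation~(\ref{eq:z_eq}) into constants and variables, (ii) verifying each variable is effectively binary, and (iii) arguing that, in the large-$N$ regime, the variables are uncorrelated in the sense of Definition~\ref{def:equal_imp}.

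First I would read off the non-constant factors of $z$ in Equation~(\ref{eq:z_eq}). The weight entries $W^{\Phi^{(l)}_k}_{\beta_{l0},\beta_{l1}}$, $W^{\Psi^{(l)}_q}_{\cdot,\cdot}$, $W^{(\mathrm{c}_k)}_{\gamma_{k0},\gamma_{k1}}$, the self-loop scalars $\epsilon^{(l)}$, and the leading $\mathbb{C}$ are all pinned down by pretraining and absorb cleanly into the constant $b$ of Lemma~\ref{lemma:equal_imp}. The genuine variables are therefore the adjacency entries $A^{(l)}_{\alpha_{l0},\alpha_{l1}}$, the feature entry $X_{i,j}$, and the activation-pattern entries $P^{(l)}_{\cdot,\cdot}$ and $P^{(\mathrm{c}_k)}_{\cdot,\cdot}$. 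Adjacency entries are binary by construction, and since the activation is ReLU, Definition~\ref{def:act_p} forces each $P^{(l)}_{i,j}\in\{0,1\}$: it equals $1$ exactly when the corresponding pre-activation is strictly positive and $0$ otherwise. Assuming, as in the benchmark datasets the paper targets, that node features are binary (otherwise the nonzero value is treated as the ``present'' state $x_k$), every variable in $z$ takes values in $\{0,x_k\}$, matching the hypothesis of Lemma~\ref{lemma:equal_imp}.

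Next I would address the uncorrelatedness condition, which is the crux of the proof. Strictly speaking the $P^{(l)}$'s are deterministic functions of $A$ and $X$ via the forward pass, so the variables are not independent. The argument I would push through is asymptotic: each pre-activation $H^{(l)\prime}_{i,j}$ is a sum over the $l$-hop neighbourhood of node $i$, whose cardinality grows unboundedly with $N$, while any single adjacency, feature, or pattern variable appearing in $z$ contributes to only $O(1)$ of those summands. Hence toggling one such variable perturbs $H^{(l)\prime}_{i,j}$ by $O(1)$ out of a sum of growing magnitude, so with probability $1-o(1)$ the sign — and therefore the value of $P^{(l)}_{i,j}$ — is preserved. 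The manifold-equality condition of Definition~\ref{def:equal_imp} then holds to leading order between any pair of variables appearing in $z$, giving asymptotic equal contribution, and Lemma~\ref{lemma:equal_imp} applies to yield the claim.

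The main obstacle is clearly the decorrelation step: making it quantitative requires pinning down a regularity hypothesis on the graph (e.g.\ expanding $l$-hop neighbourhoods together with a non-degeneracy condition preventing most pre-activations from sitting exactly at the ReLU threshold) so that the ``$O(1)$ out of $\Theta(N^{\kappa})$'' perturbation leaves the sign undisturbed with high probability. I would devote the bulk of the write-up to identifying the mildest such assumption consistent with the paper's empirical regime, and would note explicitly that $X_{i,j}$ can be pulled out as the distinguished input-feature factor (the ``dominant'' role mentioned after Equation~(\ref{eq:z_eq})) so that the remaining $A$- and $P$-factors are the ones requiring the asymptotic argument.
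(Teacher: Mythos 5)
Your overall architecture matches the paper's: absorb the pretrained weights into the constant, observe that the remaining factors (adjacency entries, the feature entry, and the ReLU activation patterns, which Definition~\ref{def:act_p} indeed forces into $\{0,1\}$) are effectively binary, and use the large-$N$ assumption to neutralize the fact that the $P$'s are deterministic functions of $(A,X)$, so that the zero-product argument behind Lemma~\ref{lemma:equal_imp} can run. Where you diverge is in how the large-$N$ step is executed, and here your version has a genuine gap. The paper does not argue probabilistically that toggling one variable ``typically preserves the sign'' of the pre-activation. Instead it proves an exact feasibility statement (Lemma~\ref{lemma:free_range} in the appendix): writing the pre-activation $h^{(l)\prime}_{c,e}$ as a sum of scalar products, it counts that the products involving the variables of $z$ form a vanishing fraction of all products --- e.g.\ $|c(h^{(l)\prime}_{c,e},X_{i,j})|/|h^{(l)\prime}_{c,e}|=\frac{1}{Nd}$, with analogous vanishing ratios for the $A$- and $P$-factors --- so for large $N$ at least one scalar product of $h^{(l)\prime}_{c,e}$ remains unconstrained, its free variables let the residual sum range over all of $\mathbb{R}$, and hence any entry $P_{c,e}$ can be driven to either value of its domain while every variable of $z$ is held fixed. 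Feasibility of the joint assignments $(\nu_i=x_i,\nu_j=0)$ and $(\nu_i=0,\nu_j=x_j)$ is all that is needed: once they are realizable, the manifold equality of Definition~\ref{def:equal_imp} holds \emph{exactly}, because both restricted functions are identically zero ($z$ is a product with a zero factor), and the contradiction argument of Lemma~\ref{lemma:equal_imp} closes the proof.

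Your sign-stability route has two concrete weaknesses relative to this. First, it proves more than is needed while delivering less: showing sign preservation with probability $1-o(1)$ requires a probability model over inputs plus the regularity hypotheses you yourself flag (expanding neighbourhoods, non-degeneracy at the ReLU threshold, and implicitly a lower bound on the magnitude of the pre-activation --- a growing number of summands does not by itself rule out cancellations), none of which the paper assumes; mere \emph{existence} of one consistent configuration of the unconstrained variables suffices, and that is exactly what the term-counting argument supplies assumption-free. Second, your conclusion that the ``manifold-equality condition holds to leading order'' misreads what Definition~\ref{def:equal_imp} demands: it is an exact functional identity, and it holds exactly here for the trivial reason that both sides vanish; the entire difficulty in the lemma is feasibility of the joint assignments, not approximate equality of the manifolds, so an asymptotic ``to leading order'' version does not literally establish the claimed equal contribution. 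If you replace the high-probability perturbation step with the paper's counting-based feasibility lemma, your intended reduction to Lemma~\ref{lemma:equal_imp} goes through as planned.
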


For the purpose of theoretical analysis, in the lemmas, we take the assumption that there are large number of nodes in the graphs. We demonstrate later in the experiments that ``a large number of nodes'' is not required in practical application of GOAt. 

Hence, by Equation (\ref{eq:eq_contri}), we can calculate the contribution $I_\nu(z)$ of a variable $\nu$ (i.e., an entry in $A$, $X$ and $P$ matrices) to each scalar product $z$ (given by Equation~(\ref{eq:z_eq})) by:
\begin{equation}\label{eq:equal_z}
    I_\nu(z) = \frac{z}{|V(z)|},
\end{equation}
where function $V(\cdot)$ represents the set of variables in its input, and $|V(z)|$ denotes the number of unique \textbf{variables} in $z$, e.g., $V(x^2y)=\{x,y\}$, and $|V(x^2y)|=2$.

Similar to \black{Section~\ref{sec:overview}}, an entry $f_{m,n}(A,X)$ of the output matrix $f(A,X)$ can be expressed by the sum of all the related scalar products as
\begin{equation} \label{eq:f_entry_expand}
    \small
    \begin{split}
    f_{m,n}(A,X)=\sum \mathbb{C} \cdot
    & \left(P^{(1)}_{\alpha_{10},\beta_{11}}\dots P^{(L)}_{m,\beta_{L1}} \right) \cdot \left(P^{(\mathrm{c}_1)}_{m,\gamma_{11}}\dots P^{(\mathrm{c}_{(M-1)})}_{m,\gamma_{(M-1)1}} \right) \cdot \left( A^{(L)}_{m,\alpha_{L1}} \dots A^{(1)}_{\alpha_{10},\alpha_{11}} \right) \\
    & \cdot X_{i,j} \cdot \left(W^{(1)}_{\beta_{10},\beta_{11}}\dots W^{(L)}_{\beta_{L0},\beta_{L1}} \right) \cdot \left(W^{(\mathrm{c}_1)}_{\gamma_{10},\gamma_{11}}\dots W^{(\mathrm{c}_M)}_{\gamma_{M0},n }\right),
    \end{split}
\end{equation}
where summation is over all possible \black{$(\alpha_{l0},\alpha_{l1}), (\beta_{l0},\beta_{l1}), (\gamma_{l0},\gamma_{l1})$}, for message passing layer $l=1,\dots, L$ or classifier layer $l=1,\dots, M$, as well as all $i, j$ indices for $X$.
By summing up the contribution of each variable $\nu$ among the entries in the $A$, $X$ and $P$'s in all the scalar products in the expansion form of $f_{m,n}(\cdot)$, we can obtain the contribution of $\nu$ to $f_{m,n}(\cdot)$ as: 
\begin{equation}\label{eq:equal_fmn}
    I_\nu({f_{m,n}(\cdot)}) = \sum_{z\mathrm{\,in\,}f_{m,n}(\cdot)\mathrm{\,that\,contain\,} \nu} \frac{z}{|V(z)|}.
\end{equation}

\begin{theorem} [\textbf{Contribution of variables in the expansion form of a pretrained GNN}]
\label{thm:main_thm}
    Given Equations~(\ref{eq:equal_z}) and (\ref{eq:equal_fmn}), for each variable $\nu$ (i.e., an entry in $A$, $X$ and $P$ matrices),
    when the number of nodes $N$ is large, we can approximate $I_\nu({f_{m,n}(\cdot)})$ by: 
    \begin{equation}\label{eq:gnn_attr_est}
        I_\nu({f_{m,n}(\cdot)}) = \sum_{z\mathrm{\,in\,}f_{m,n}(\cdot)\mathrm{\,that\,contain\,} \nu} \frac{O(\nu,z)}{\sum_{\rho\mathrm{\,in\,}z}O(\rho,z)}\cdot z,
    \end{equation}
    where $O(\nu,z)$ denotes the number of occurrences of $\nu$ among the variables of $z$. 
\end{theorem}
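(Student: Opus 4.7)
The plan is to compare Equation~(\ref{eq:equal_fmn}), which defines $I_\nu(f_{m,n}(\cdot))$ by the equal-contribution rule, with the right-hand side of Equation~(\ref{eq:gnn_attr_est}), and argue that the two agree up to lower-order terms in $1/N$. For each scalar product $z$ appearing in the expansion of $f_{m,n}(\cdot)$, let $T(z)=\sum_{\rho\text{ in }z}O(\rho,z)$ denote the total length of $z$ counted with multiplicity; then $|V(z)|\le T(z)$, with equality precisely when every variable factor of $z$ appears exactly once. I will call such a $z$ \emph{generic}, and the remaining products \emph{degenerate}.

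On generic terms the identity $\tfrac{1}{|V(z)|}=\tfrac{O(\nu,z)}{T(z)}$ is trivial, because $O(\nu,z)=1$ whenever $\nu\in V(z)$ and $T(z)=|V(z)|$. Thus the restriction of the sum in Equation~(\ref{eq:equal_fmn}) to generic products already matches the corresponding restriction of Equation~(\ref{eq:gnn_attr_est}) term by term, and the only remaining work is to show that the degenerate portion of each sum is asymptotically small.

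To handle the degenerate part, I would go back to the index-wise expansion in Equation~(\ref{eq:f_entry_expand}). A degeneracy in $z$ is forced by a coincidence among the summed indices---two $A$ factors sharing both row and column, two $P^{(l)}$ entries at the same $(i,j)$, or a shared $\beta$- or $\gamma$-index in the weight chain. Each such forced coincidence kills one summation degree of freedom of order $N$, so the number of index tuples producing a fixed-shape degenerate product is smaller than the number producing the corresponding generic product by a factor of $O(1/N)$. Combined with uniform boundedness of the entries of $A$, $X$, $W$ and $P$ (which follows because the GNN is pretrained, inputs are bounded, and activation patterns in $[0,1]$ for the ReLU case), the absolute contribution of degenerate products to $I_\nu(f_{m,n}(\cdot))$ under either weighting is an $O(1/N)$ perturbation of the generic contribution. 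Adding this bound to the generic identity yields Equation~(\ref{eq:gnn_attr_est}) up to an $O(1/N)$ error, which is exactly the ``large $N$'' approximation claimed.

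The main obstacle is the counting estimate for the degenerate block: the value of $z$ is not purely combinatorial, since $X$ entries and activation-pattern entries can be correlated across index tuples in data-dependent ways, so one cannot simply factor the sum into ``count of tuples'' times ``typical magnitude.'' A fully rigorous version will either need an explicit uniform bound on all non-constant entries, or a concentration-style argument showing that index collisions carry a vanishing share of the total mass. In the asymptotic regime posited by the theorem---the same one already used to prove Lemma~\ref{lemma:gnn_indep}---the rougher worst-case bound sketched above is sufficient to deliver the result as a leading-order statement, which is what the theorem asserts.
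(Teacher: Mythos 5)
Your proposal follows essentially the same route as the paper's proof: the paper splits $I_\nu(f_{m,n}(\cdot))$ into the part from products with $|V(z)|=\sum_{\rho}O(\rho,z)$ (your generic terms), where the identity holds trivially, and the remainder, whose relative share it bounds by the same counting argument you sketch, $\bigl(\binom{L}{2}N^{L-1}+\dots+\binom{L}{L}N\bigr)/N^{L+1}\to 0$ as $N\to\infty$, under a uniform bound $c$ on the scalar products --- exactly the uniform-boundedness assumption you flag as the main obstacle. The only cosmetic difference is that the paper shortcuts your case analysis of possible index coincidences by observing that repeated variables can arise only from $A$ entries, never from $X$ or $P$.
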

Recall that $|V(z)|$ stand for the number of \textbf{unique variables} in $z$. Hence the total number of occurrences of all the variables $\sum_{\rho\mathrm{\,in\,}z}O(\rho,z)$ is not necessarily equal to $|V(z)|$. For example, if all of $\{A^{(1)}_{\alpha_{10},\alpha_{11}}, \dots, A^{(L)}_{\alpha_{L0},\alpha_{L1}}\}$ in $z$ are unique entries in $A$, then they can be considered as $L$ independent variables in the function representing $z$. If two of these occurrences of variables refer to the same entry in $A$, then there are only $(L-1)$ unique variables related to $A$. 

Although Theorem~\ref{thm:main_thm} gives the contribution of each entry in $A$, $X$ and \black{$P$'s}, we need to further attribute \black{$P$'s} to $A$ and $X$ and allocate the contribution of each activation pattern \black{$P_{a,b}^{(r)}$} to node features $X$ and edges $A$ by considering all non-zero features \black{in $X_a$} of node $v_a$ and the edges within $m$ hops of node $v_a$, as these inputs may contribute to the activation pattern $P_{a,b}^{(r)}$. However, determining the exact contribution of each feature that contributes to $P_{a,b}^{(r)}$ is not straightforward due to non-linear activation. 
We approximately attribute all relevant features equally to $P_{a,b}^{(r)}$. That is, each input feature $\nu$ that has nonzero contribution to $P_{a,b}^{(r)}$ will share an equal contribution of ${I_{P_{a,b}^{(r)}}\left({f_{m,n}(\cdot)}\right)}/{|V(P_{a,b}^{(r)})|}$, where $|V(P_{a,b}^{(r)})|$ denotes the number of distinct node and edge features in $X$ and $A$ contributing to $P_{a,b}^{(r)}$, which is exactly all non-zero features in $X_a$ of node $v_a$ and the adjacency matrix entries within $r$ hops of node $v_a$. Finally, based on Equation (\ref{eq:gnn_attr_est}), we can obtain the contribution of an input feature $\nu$ in $X$, $A$ of a graph instance to the $(m,n)$-th entry of the GNN output $f(\cdot)$ as: 
\begin{equation}
    \label{eq:I_hat}
    \small
    \widehat{I}_\nu({f_{m,n}(\cdot)}) = I_\nu({f_{m,n}(\cdot)}) + \sum_{P_{a,b}^{(r)}\mathrm{\,in\,}f_{m,n}(\cdot),\mathrm{\,with\,}\nu\mathrm{\,in\,}P_{a,b}^{(r)}} \frac{I_{P_{a,b}^{(r)}}\left({f_{m,n}(\cdot)}\right)}{|V(P_{a,b}^{(r)})|}, 
\end{equation}
where $\nu$ is an entry in the adjacency matrix $A$ or the input feature matrix $X$, $P_{a,b}^{(r)}$ denotes an entry in all the activation patterns.  
Thus, we have attributed $f(\cdot)$ to each input feature of a given data instance. 

Our approach meets the \emph{completeness} axiom, which is a critical requirement in attribution methods~\citep{sundararajan2017axiomatic, shrikumar2017learning, binder2016layer}. This axiom guarantees that the attribution scores for input features add up to the difference in the GNN's output with and without those features. Passing this sanity check implies that our approach provides a more comprehensive account of feature importance than existing methods that only rank the top features~\citep{bajaj2021robust, luo2020parameterized, pope2019explainability, ying2019gnnexplainer, vu2020pgm, shan2021reinforcement}. 

\subsection{Case Study: Explaining Graph Convolutional Network (GCN)}
\label{sec:exp_gcn}
GCNs use a simple sum in the combination step, and the adjacency matrix is normalized with the diagonal node degree matrix $D$. Hence, the hidden state of a GCN's $l$-th message-passing layer is: 
\begin{equation}
    H^{(l)} = \mathrm{ReLU}\left (VH^{(l-1)}W^{(l)}+B^{(l)} \right ),
\end{equation}
where $V=D^{-\frac{1}{2}}(A+I)D^{-\frac{1}{2}}$ represents the normalized adjacency matrix with self-loops added. 
Suppose a GCN has three convolution layers and a 2-layer MLP as the classifier, then its expansion form without the activation functions \black{$\mathrm{ReLU(\cdot)}$} will be:
\begin{equation}\label{eq:gcn_expand}
    \small
    \begin{split}
    f(V,X)_{\not\mathbf{P}}&=V^{(3)}V^{(2)}V^{(1)}XW^{(1)}W^{(2)}W^{(3)}W^{(c_1)}W^{(c_2)} + V^{(3)}V^{(2)}B^{(1)}W^{(2)}W^{(3)}W^{(c_1)}W^{(c_2)}\\
    & + V^{(3)}B^{(2)}W^{(3)}W^{(c_1)}W^{(c_2)} + B^{(3)}W^{(c_1)}W^{(c_2)} + B^{(c_1)}W^{(c_2)} + B^{(c_2)},
    \end{split}
\end{equation}
where $V^{(l)}=V$ is the normalized adjacency matrix in the $l$-th layer's calculation. In the actual expansion form with the activation patterns, the corresponding \black{$P^{(m)}$'s} are multiplied whenever there is a $W^{(m)}$ or $B^{(m)}$ in a scalar product, excluding the last layer $W^{(c_2)}$ and $B^{(c_2)}$. For example, in the scalar products corresponding to $V^{(3)}V^{(2)}V^{(1)}XW^{(1)}W^{(2)}W^{(3)}W^{(c_1)}W^{(c_2)}$, there are eight variables consisting of four \black{$P$'s}, one $X$, and three \black{$V$'s}. By Equation~(\ref{eq:gnn_attr_est}), an adjacency entry $V_{i,j}$ itself will contribute $\frac{1}{8}$ of $p(V^{(3)}V^{(2)}_{:i}V^{(1)}_{i,j}X_{j:}W^{(1)}W^{(2)}W^{(3)}W^{(c_1)}W^{(c_2)})+p(V^{(3)}_{:i}V^{(2)}_{i,j}V^{(1)}_{j:}XW^{(1)}W^{(2)}W^{(3)}W^{(c_1)}W^{(c_2)})+p(V^{(3)}_{i,j}V^{(2)}_{j:}V^{(1)}XW^{(1)}W^{(2)}W^{(3)}W^{(c_1)}W^{(c_2)})$, where $p(\cdot)$ denotes the results with the element-wise multiplication of the corresponding activation patterns applied at the appropriate layers. After we obtain the contribution of $V_{i,j}$ itself on all the scalar products, we can follow Equation~(\ref{eq:I_hat}) to allocate the contribution of activation patterns to $V_{i,j}$. 

With Equation (\ref{eq:gcn_expand}), we find that when both $V$ and $X$ are set to zeros, $f(\cdot)$ remains non-zero and is:
\begin{equation}
    \label{eq:gcn_base}
    f(\mathbf{0},\mathbf{0})= p(B^{(3)}W^{(c_1)}W^{(c_2)}) + p(B^{(c_1)}W^{(c_2)}) + B^{(c_2)},
\end{equation}
where $B^{(c_2)}$ is the global bias, and the other terms have non-zero entries at the activated neurons. In other words, certain GNN neurons in the $3$-rd and $c_1$-th layers may already be activated prior to any input feature being passed to the network. When we do feed input features, some of these neurons may remain activated or be toggled off.
With Equation~(\ref{eq:I_hat}), we consider taking all \black{$0$'s} of the $X$ entries, $V$ entries and $P$ entries as the base manifold. Now, given that some of the $P$ entries in GCN are non-zero when all $X$ and $V$ set to zeros, as present in Equation~(\ref{eq:gcn_base}), we will need to subtract the contribution of each features on these $P$ from the contribution values calculated by Equation~(\ref{eq:I_hat}).
We let $\mathbf{P}^\prime$ represent the activation patterns of $f(\mathbf{0},\mathbf{0})$, then the calibrated contribution $\widehat{I}^{\mathrm{cali}}_{V_{i,j}}(f(\cdot))$ of $V_{i,j}$ is given by:
\begin{equation}
    \small
    \widehat{I}^{\mathrm{cali}}_{V_{i,j}}(f(\cdot))=\widehat{I}_{V_{i,j}}(f(V,X)) - \sum_{P_{a,b}^{\prime(r)}\mathrm{\,in\,}f(\mathbf{0},\mathbf{0}),\mathrm{\,with\,}V_{i,j}\mathrm{\,in\,}P_{a,b}^{\prime(r)}} \frac{I_{P_{a,b}^{\prime(r)}}\left({f(\mathbf{0},\mathbf{0})}\right)}{|V(P_{a,b}^{(r)})|}. 
\end{equation}

In graph classification tasks, a pooling layer such as mean-pooling is added after the convolution layers to obtain the graph representation. To determine the contribution of each input feature, we can simply apply the same pooling operation as used in the pre-trained GCN. 

As we mentioned in Section~\ref{sec:problem}, we aim to obtain the explanations by the critical edges in this paper, since edges have more fine-grained information than nodes. Therefore, we treat the edges as variables, while considering the node features $X$ as constants similar to parameters $W$ or $B$. This setup naturally aggregates the contribution of node features onto edges. By leveraging edge attributions, we are able to effectively highlight motifs within the graph structure. 

\section{Experiments}
\label{sec:experiments}

\begin{figure*}[t]
\vskip -0.4in
\begin{center}
\centerline{\includegraphics[width=0.9\textwidth]{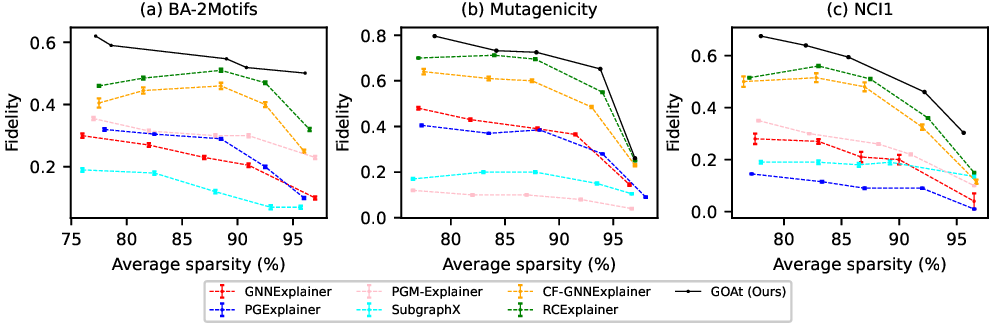}}
\vskip -0.1in
\caption{Fidelity performance averaged across 10 runs on the pretrained GCNs for the datasets at different levels of average sparsity. } 
\label{fig:gcn_fidelity}
\end{center}
\vskip -0.35in
\end{figure*}

We evaluate the performance of explanations on three variants of GNNs, which are GCN~\citep{kipf2017semisupervised}, GraphSAGE~\citep{hamilton2017inductive} and GIN~\citep{xupowerful}. The experiments are conducted on \black{six datasets including }both the graph classification datasets and the node classification datasets. For graph classification task, we evaluate on a synthetic dataset, BA-2motifs~\citep{luo2020parameterized}, and two real-world datasets, Mutagenicity~\citep{kazius2005derivation} and NCI1~\citep{pires2015pkcsm}. For node classification task, we evaluate on three synthetic datasets~\citep{luo2020parameterized}, which are BA-shapes, BA-Community and Tree-grid. 
We conduct a series of experiments on the fidelity, discriminability and stability metrics to compare our method with the state-of-the-art methods including GNNExplainer~\citep{ying2019gnnexplainer}, PGExplainer~\citep{luo2020parameterized}, PGM-Explainer~\citep{vu2020pgm}, SubgraphX~\citep{yuan2021explainability}, CF-GNNExplainer~\citep{lucic2022cf}, RCExplainer~\citep{bajaj2021robust}, RG-Explainer~\citep{shan2021reinforcement} and DEGREE~\citep{feng2022degree}. We reran these baselines since most of them are not trained on GraphSAGE and GIN. We implemented RCExplainer as the code is not publicly available. As outlined in Section~\ref{sec:problem}, we highlight edges as explanations as suggested by~\citep{faber2021comparing}. For baselines that identify nodes or subgraphs as explanations, we adopt the evaluation setup from~\citep{bajaj2021robust}. 
As space is limited, we will only present the key results here. Fidelity results on GIN and GraphSAGE, as well as the results of node classification tasks are in Appendix~\ref{app:fid_sage_gin} and Appendix~\ref{app:node_classi}. Discussions on the controversial metrics such as accuracy are also moved to the Appendix~\ref{app:node_classi}. 

\begin{figure*}[t]
\vskip -0.45in
\begin{center}
\centerline{\includegraphics[width=0.93\textwidth]{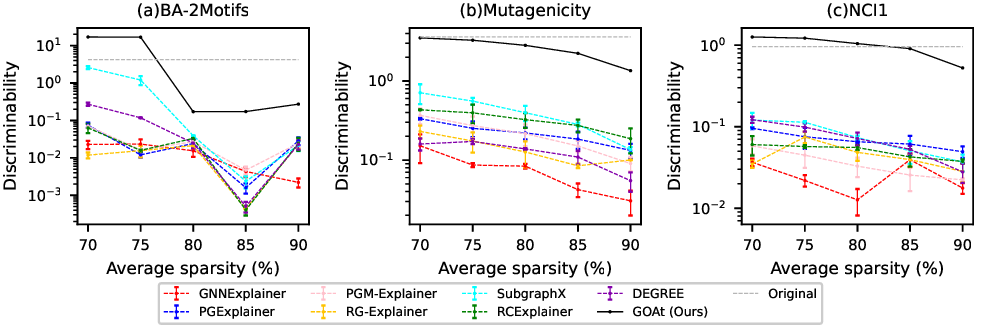}}
\vskip -0.1in
\caption{ Discriminability performance averaged across 10 runs of the explanations produced by various GNN explainers at different levels of sparsity. "Original" refer to the performance of feeding the original data into the GNN without any modifications or explanations applied.}
\label{fig:gcn_discrimination} 
\end{center}
\vskip -0.3in
\end{figure*}

\begin{figure*}[t]
\begin{center}
\centerline{\includegraphics[width=.88\textwidth]{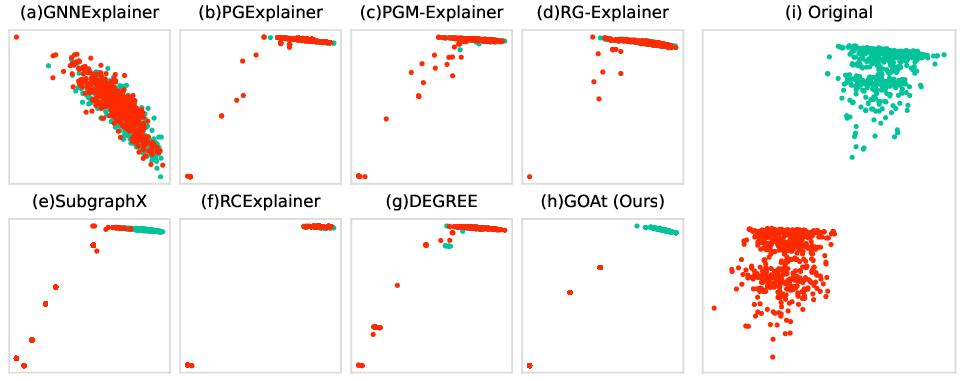}}
\vskip -0.1in
\caption{ Visualization of explanation embeddings on the BA-2Motifs dataset. Subfigure (i) refers to the visualization of the original embeddings by directly feeding the original data into the GNN without any modifications or explanations applied. } 
\label{fig:ba_2motifs_scatter}
\end{center}
\vskip -0.4in
\end{figure*}

\subsection{Fidelity}
\label{sec:fidelity}
Fidelity~\citep{pope2019explainability,yuan2022explainability,wu2022graph, bajaj2021robust} is the decrease of predicted probability between original and new predictions after removing important edges, which are used to evaluate the faithfulness of explanations. It is defined as $fidelity(S,G)=f_{y}(G)-f_{y}(G\backslash S)$.
Similar to~\citep{yuan2022explainability, xie2022task,wu2022graph, bajaj2021robust}, we evaluate fidelity at different sparsity levels, where $sparsity(S,G)=1-\frac{|S|}{|\mathcal{E}|}$, indicating the percentage of edges that remain in $G$ after the removal of edges in $S$. Higher sparsity means fewer edges are identified as critical, which may have a smaller impact on the prediction probability. 
Figure~\ref{fig:gcn_fidelity} displays the fidelity results, with the baseline results sourced from~\citep{bajaj2021robust}. Our proposed approach, {\smodel}, consistently outperforms the baselines in terms of fidelity across all sparsity levels, validating its superior performance in generating accurate and reliable faithful explanations. Among the other methods, RCExplainer exhibits the highest fidelity, as it is specifically designed for fidelity optimization. Notably, unlike the other methods that require training and hyperparameter tuning, {\smodel} offers the advantage of being a training-free approach, thereby avoiding any errors across different runs.

\subsection{Discriminability}
\label{sec:robust}

Discriminability, also known as discrimination ability~\citep{Bau_2017_CVPR, 9022542}, refers to the ability of the explanations to distinguish between the classes. We define the discriminability between two classes $c_1$ and $c_2$ as the L2 norm of the difference between the mean values of explanation embeddings $h^{(L)}_S$ of the two classes:
\begin{equation}
    \black{discriminability(c_1,c_2)=\left\|\frac{1}{N_{c_1}}\sum_{i\in{c_1}}{h^{(L)}_{S_i}} - \frac{1}{N_{c_2}}\sum_{j\in{c_2}}{h^{(L)}_{S_j}}  \right\|_2, }
\end{equation}
\black{where $N_{c_1}$ and $N_{c_2}$ refer to the number of data samples in class $c_1$ and $c_2$.}
The embeddings $h^{(L)}_S$ used for explanations are taken prior to the last-layer classifier, with node embeddings employed for node classification tasks and graph embeddings utilized for graph classification tasks. In this procedure, only the explanation subgraph $S$ is fed into the GNN instead of $G$. 

We show the discriminability across various sparsity levels on GCN, as illustrated in Figure~\ref{fig:gcn_discrimination}. 
\black{When we evaluate the discriminability, we filtered out the wrongly predicted data samples. Therefore, only when GNN makes a correct prediction, the embedding would be included in the discriminability computation.}
Due to the significant performance gap between the baselines and {\smodel}, a logarithmic scale is employed. Our approach consistently outperforms the baselines in terms of discriminability across all sparsity levels, demonstrating its superior ability to generate accurate and reliable class-specific explanations. Notably, at $sparsity=0.7$, {\smodel} achieves higher discriminability than the original graphs on the BA-2Motifs and NCI1 datasets. This indicates that {\smodel} effectively reduces noise unrelated to the investigated class while producing informative class explanations. 

Furthermore, we present scatter plots to visualize the explanation embeddings generated by various GNN explainers. Figure~\ref{fig:ba_2motifs_scatter} showcases the explanation embeddings obtained from different GNN explaining methods on the BA-2Motifs dataset, with $sparsity=0.7$. More scatter plots on Mutagenicity and NCI1 and can be found in the Appendix~\ref{app:visual_matag_nci1}. The explanations generated by GNNExplainer fail to exhibit class discrimination, as all the data points are clustered together without any distinct separation.
While some of the Class 1 explanations produced by PGExplainer, PGM-Explainer, RG-Explainer, RCExplainer, and DEGREE are noticeably separate from the Class 0 explanations, the majority of the data points remain closely clustered together.
As for SubgraphX, most of its Class 1 explanations are isolated from the Class 0 explanations, but there is a discernible overlap between the Class 1 and Class 0 data points. 
In contrast, our method, {\smodel}, generates explanations that clearly and effectively distinguish between Class 0 and Class 1, with no overlapping points and a substantial separation distance, highlighting the strong discriminability of our approach.  

\subsection{Stability of extracting motifs}
\label{sec:stability}

\begin{figure*}[t]
\vskip -0.45in
\begin{center}
\centerline{\includegraphics[width=0.93\textwidth]{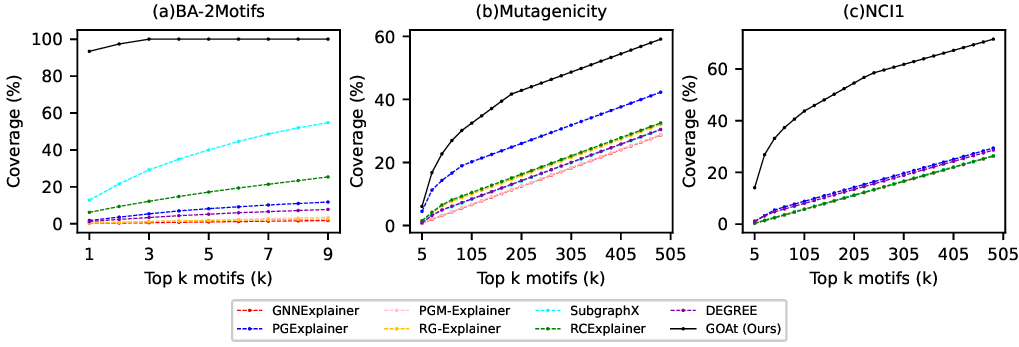}}
\vskip -0.15in
\caption{Coverage of the top-$k$ explanations across the datasets. } 
\label{fig:gcn_stability_unbiased}
\end{center}
\vskip -0.35in
\end{figure*}

As we will later show in Section~\ref{sec:quality}, it is often observed that datasets contain specific class motifs. For instance, in the BA-2Motifs dataset, the Class 1 motif exhibits a "house" structure. To ensure the stability of GNN explainers in capturing the class motifs across diverse data samples, we aim for the explanation motifs to exhibit relative consistency for data samples with similar properties, rather than exhibiting significant variations. To quantify this characteristic, we introduce the \emph{stability} metric, which measures the coverage of the top-$k$ explanations across the dataset:  
\begin{equation}
    \black{stability=\frac{\sum_{S_i\in \mathcal{S}_{topk}}{N_{S_i}}}{N}, }
\end{equation}
\black{where the explanations $S_i$'s are ranked by the number of data samples that each explanation can explain, i.e., $N_{S_i}$, $\mathcal{S}_{topk}$ is the set of the top-$k$ explanations based on the rank, and $N$ is total number of data samples.}
An ideal explainer should generate explanations that cover a larger number of data samples using fewer motifs. This characteristic is also highly desirable in global-level explainers, such as~\citep{azzolin2023global, huang2023global}.

We illustrate the stability of the unbiased class as the percentage converge of the top-$k$ explanations produced on GCN with $sparsity=0.7$ in Figure~\ref{fig:gcn_stability_unbiased}. Our approach surpasses the baselines by a considerable margin in terms of the stability of producing explanations. Specifically, {\smodel} is capable of providing explanations for all the Class 1 data samples using only three explanations. This explains why there are only three Class 1 scatters visible in Figure~\ref{fig:ba_2motifs_scatter}. 

\subsection{Qualitative analysis}
\label{sec:quality}

\begin{table*}[t]
\vskip -0.45in
\caption{ Qualitative results of the top motifs of each class produced by PGExplainer, SubgraphX, RCExplainer and {\smodel}. The percentages indicate the coverage of the explanations. }
\label{tab:qualitative_main}
\begin{center}
\small
\scalebox{0.87}{
\begin{tabular}{l|cccc|cccc|cccc}
\toprule
 & \multicolumn{4}{c|}{BA-2Motifs} & \multicolumn{4}{c|}{Mutagenicity} & \multicolumn{4}{c}{NCI1} \\
 & \multicolumn{2}{c}{Class0}&\multicolumn{2}{c|}{Class1} & \multicolumn{2}{c}{Class0}&\multicolumn{2}{c|}{Class1}& \multicolumn{2}{c}{Class0}&\multicolumn{2}{c}{Class1}\\
\midrule
\multirow{2}{*}{PGExplainer} & \multicolumn{2}{c}{\includegraphics[height=0.05\textwidth]{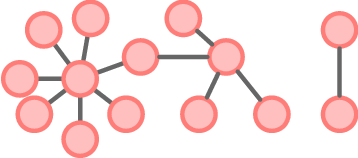}}&\multicolumn{2}{c|}{\includegraphics[height=0.05\textwidth]{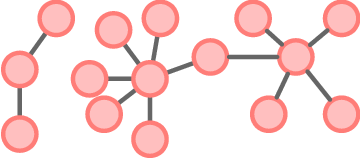}} & \multicolumn{2}{c}{\includegraphics[height=0.06\textwidth]{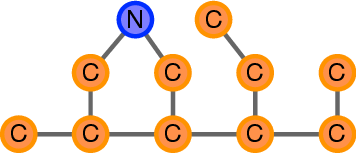}}&\multicolumn{2}{c|}{\includegraphics[height=0.06\textwidth]{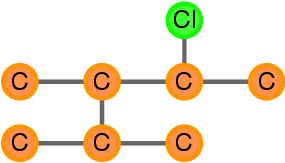}}& \multicolumn{2}{c}{\includegraphics[height=0.04\textwidth]{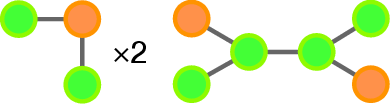}}&\multicolumn{2}{c}{\includegraphics[height=0.06\textwidth]{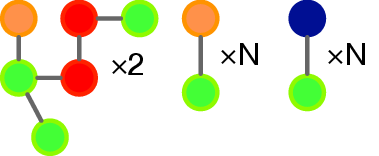}} \\
& \multicolumn{2}{c}{4.8\%}&\multicolumn{2}{c|}{1.8\%} & \multicolumn{2}{c}{1.2\%}&\multicolumn{2}{c|}{1.3\%}& \multicolumn{2}{c}{0.1\%}&\multicolumn{2}{c}{0.5\%}\\
\midrule
\multirow{2}{*}{SubgraphX} & \multicolumn{2}{c}{\includegraphics[height=0.05\textwidth]{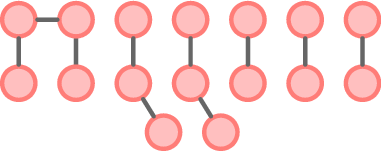}}&\multicolumn{2}{c|}{\includegraphics[height=0.05\textwidth]{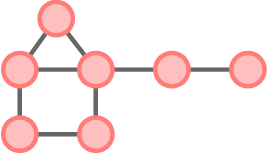}} & \multicolumn{2}{c}{\includegraphics[height=0.04\textwidth]{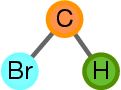}}&\multicolumn{2}{c|}{\includegraphics[height=0.04\textwidth]{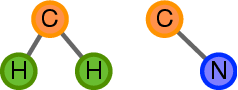}}& \multicolumn{2}{c}{\includegraphics[height=0.045\textwidth]{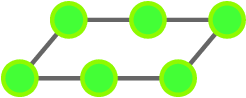}}&\multicolumn{2}{c}{\includegraphics[height=0.045\textwidth]{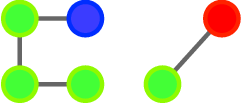}} \\
& \multicolumn{2}{c}{0.4\%}&\multicolumn{2}{c|}{12.8\%} & \multicolumn{2}{c}{0.2\%}&\multicolumn{2}{c|}{0.2\%}& \multicolumn{2}{c}{0.2\%}&\multicolumn{2}{c}{0.1\%}\\
\midrule
\multirow{2}{*}{RCExplainer} & \multicolumn{2}{c}{\includegraphics[height=0.05\textwidth]{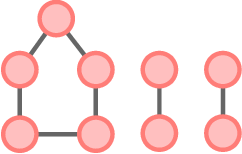}}&\multicolumn{2}{c|}{\includegraphics[height=0.05\textwidth]{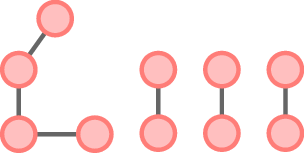}} & \multicolumn{2}{c}{\includegraphics[height=0.05\textwidth]{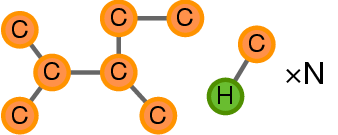}}&\multicolumn{2}{c|}{\includegraphics[height=0.04\textwidth]{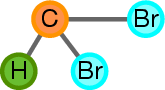}}& \multicolumn{2}{c}{\includegraphics[height=0.04\textwidth]{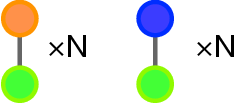}}&\multicolumn{2}{c}{\includegraphics[height=0.04\textwidth]{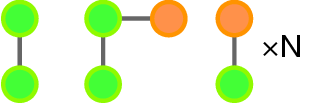}} \\
& \multicolumn{2}{c}{6.4\%}&\multicolumn{2}{c|}{6.2\%} & \multicolumn{2}{c}{0.4\%}&\multicolumn{2}{c|}{0.5\%}& \multicolumn{2}{c}{0.05\%}&\multicolumn{2}{c}{0.1\%}\\
\midrule
\multirow{2}{*}{{\smodel}} &\includegraphics[height=0.05\textwidth]{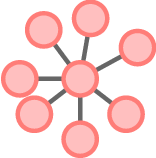}& \includegraphics[height=0.04\textwidth]{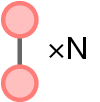}& \includegraphics[height=0.05\textwidth]{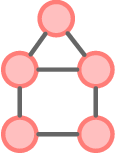} & \includegraphics[height=0.05\textwidth]{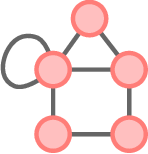} & \includegraphics[height=0.06\textwidth]{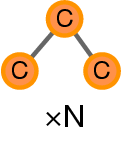}& \includegraphics[height=0.06\textwidth]{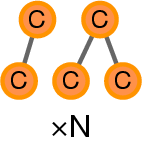}& \includegraphics[height=0.06\textwidth]{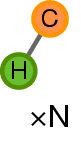} & \includegraphics[height=0.06\textwidth]{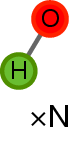} & \includegraphics[height=0.06\textwidth]{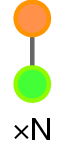}& \includegraphics[height=0.06\textwidth]{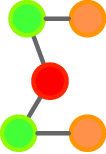}& \includegraphics[height=0.06\textwidth]{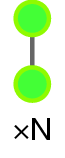} & \includegraphics[height=0.06\textwidth]{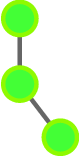}\\
& 3.8\% &3.4\% & 93.4\% & 4\% &3.5\%& 2.2\%& 2.2\% &1.2\% &3.5\% &1.2\% &4.3\% &4.0\%  \\
\bottomrule
\end{tabular}
}
\end{center}
\vskip -0.2in
\end{table*}

We present the qualitative results of our approach in Table~\ref{tab:qualitative_main}, where we compare it with state-of-the-art baselines such as PGExplainer, SubgraphX, and RCExplainer. The pretrained GNN achieves a 100\% accuracy on the BA-2Motifs dataset. As long as it successfully identifies one class, the remaining data samples naturally belong to the other class, leading to a perfect accuracy rate. Based on the explanations from {\smodel}, we have observed that the GNN effectively recognizes the "house" motif that is associated with Class 1. In contrast, other approaches face difficulties in consistently capturing this motif. The Class 0 motifs in the Mutagenicity dataset generated by {\smodel} represent multiple connected carbon rings. This indicates that the presence of more carbon rings in a molecule increases its likelihood of being mutagenic (Class 0), while the presence of more "C-H" or "O-H" bonds in a molecule increases its likelihood of being non-mutagenic (Class 1). Similarly, in the NCI1 dataset, {\smodel} discovers that the GNN considers a higher number of carbon rings as evidence of chemical compounds being active against non-small cell lung cancer. Other approaches, on the other hand, fail to provide clear and human-understandable explanations. 

\section{Related Work}
\label{sec:related_work}

Local-level Graph Neural Network (GNN) explanation approaches have been developed to shed light on the decision-making process of GNN models at the individual data instance level. 
Most of them, such as GNNExplainer~\citep{ying2019gnnexplainer}, PGExplainer~\citep{luo2020parameterized}, PGM-Explainer~\citep{vu2020pgm}, GraphLime~\citep{huang2022graphlime}, RG-Explainer~\citep{shan2021reinforcement}, CF-GNNExplainer~\citep{lucic2022cf}, RCExplainer~\citep{bajaj2021robust}, CF$^2$~\citep{tan2022learning}, RelEx~\citep{zhang2021relex} and Gem~\citep{lin2021generative}, TAGE~\citep{xie2022task}, GStarX~\citep{zhang2022gstarx} train a secondary model to identify crucial nodes, edges, or subgraphs that explain the behavior of pretrained GNNs for specific input samples. 
\black{\cite{zhao2023towards} introduces alignment objectives to improve these methods, where the embeddings of the explanation subgraph and the raw graph are aligned via anchor-based objective, distributional objective based on Gaussian mixture models, mutualinformation-based objective. }
However, the quality of the explanations produced by these methods is highly dependent on hyperparameter choices. Moreover, these explainers' black-box nature raises doubts about their ability to provide comprehensive explanations for GNN models. 
Approaches like SA~\citep{baldassarre2019explainability}, Grad-CAM~\citep{pope2019explainability}, GNN-LRP~\citep{schnake2021higher}, and DEGREE~\citep{feng2022degree}, which rely on gradient back-propagation, encounter the saturation problem~\citep{shrikumar2017learning}. As a result, these methods may generate less faithful explanations. SubgraphX~\citep{yuan2021explainability} combines perturbation-based techniques with pruning using Shapley values. While it can generate some high-quality subgraph explanations, its computational cost is significantly high due to the reliance on the MCTS (Monte Carlo Tree Search). 
Additionally, as demonstrated in our experiments in Section~\ref{sec:experiments}, existing methods exhibit inconsistencies on similar data samples and poor discriminability. This reinforces the need for our proposed method {\smodel}, which outperforms state-of-the-art baselines on fidelity, discriminability and stability metrics.

Our work also relates to global-level explainability approaches. GLGExplainer~\citep{azzolin2023global} leverages prototype learning and builds upon PGExplainer to obtain global explanations. GCFExplainer~\citep{huang2023global} generates global counterfactual explanations by employing random walks on an edit map of graphs, utilizing local explanations from RCExplainer and CF$^2$. Integrating local explainers that produce higher-quality local explanations, such as {\smodel}, has the potential to enhance the performance of these global explainers.

\section{Conclusion}
We propose {\smodel}, a local-level GNN explainer that overcomes the limitations of existing GNN explainers, in terms of insufficient discriminability, inconsistency on same-class data samples, and overfitting to noise. We analytically expand GNN outputs for each class into a sum of scalar products and attribute each scalar product to each input feature. Although {\smodel} shares similar limitations with some decomposition methods of requiring expert knowledge to design corresponding explaining processes for various GNNs, our extensive experiments on both synthetic and real datasets, along with qualitative analysis, demonstrate its superior explanation ability. Our method contributes to enhancing the transparency of decision-making in various fields where GNNs are widely applied. 

\bibliography{iclr2024_conference}

\begin{thebibliography}{37}
\providecommand{\natexlab}[1]{#1}
\providecommand{\url}[1]{\texttt{#1}}
\expandafter\ifx\csname urlstyle\endcsname\relax
  \providecommand{\doi}[1]{doi: #1}\else
  \providecommand{\doi}{doi: \begingroup \urlstyle{rm}\Url}\fi

\bibitem[Amann et~al.(2020)Amann, Blasimme, Vayena, Frey, and Madai]{amann2020explainability}
Julia Amann, Alessandro Blasimme, Effy Vayena, Dietmar Frey, and Vince~I Madai.
\newblock Explainability for artificial intelligence in healthcare: a multidisciplinary perspective.
\newblock \emph{BMC Medical Informatics and Decision Making}, 20\penalty0 (1):\penalty0 1--9, 2020.

\bibitem[Azzolin et~al.(2023)Azzolin, Longa, Barbiero, Lio, and Passerini]{azzolin2023global}
Steve Azzolin, Antonio Longa, Pietro Barbiero, Pietro Lio, and Andrea Passerini.
\newblock Global explainability of {GNN}s via logic combination of learned concepts.
\newblock In \emph{The Eleventh International Conference on Learning Representations}, 2023.
\newblock URL \url{https://openreview.net/forum?id=OTbRTIY4YS}.

\bibitem[Bajaj et~al.(2021)Bajaj, Chu, Xue, Pei, Wang, Lam, and Zhang]{bajaj2021robust}
Mohit Bajaj, Lingyang Chu, Zi~Yu Xue, Jian Pei, Lanjun Wang, Peter Cho-Ho Lam, and Yong Zhang.
\newblock Robust counterfactual explanations on graph neural networks.
\newblock \emph{Advances in Neural Information Processing Systems}, 34:\penalty0 5644--5655, 2021.

\bibitem[Baldassarre \& Azizpour(2019)Baldassarre and Azizpour]{baldassarre2019explainability}
Federico Baldassarre and Hossein Azizpour.
\newblock Explainability techniques for graph convolutional networks.
\newblock In \emph{International Conference on Machine Learning (ICML) Workshops, 2019 Workshop on Learning and Reasoning with Graph-Structured Representations}, 2019.

\bibitem[Bau et~al.(2017)Bau, Zhou, Khosla, Oliva, and Torralba]{Bau_2017_CVPR}
David Bau, Bolei Zhou, Aditya Khosla, Aude Oliva, and Antonio Torralba.
\newblock Network dissection: Quantifying interpretability of deep visual representations.
\newblock In \emph{Proceedings of the IEEE Conference on Computer Vision and Pattern Recognition (CVPR)}, July 2017.

\bibitem[Binder et~al.(2016)Binder, Montavon, Lapuschkin, M{\"u}ller, and Samek]{binder2016layer}
Alexander Binder, Gr{\'e}goire Montavon, Sebastian Lapuschkin, Klaus-Robert M{\"u}ller, and Wojciech Samek.
\newblock Layer-wise relevance propagation for neural networks with local renormalization layers.
\newblock In \emph{Artificial Neural Networks and Machine Learning--ICANN 2016: 25th International Conference on Artificial Neural Networks, Barcelona, Spain, September 6-9, 2016, Proceedings, Part II 25}, pp.\  63--71. Springer, 2016.

\bibitem[Faber et~al.(2021)Faber, K.~Moghaddam, and Wattenhofer]{faber2021comparing}
Lukas Faber, Amin K.~Moghaddam, and Roger Wattenhofer.
\newblock When comparing to ground truth is wrong: On evaluating gnn explanation methods.
\newblock In \emph{Proceedings of the 27th ACM SIGKDD Conference on Knowledge Discovery \& Data Mining}, pp.\  332--341, 2021.

\bibitem[Feng et~al.(2022)Feng, Liu, Yang, Tang, Du, and Hu]{feng2022degree}
Qizhang Feng, Ninghao Liu, Fan Yang, Ruixiang Tang, Mengnan Du, and Xia Hu.
\newblock Degree: Decomposition based explanation for graph neural networks.
\newblock In \emph{International Conference on Learning Representations}, 2022.

\bibitem[Hamilton et~al.(2017)Hamilton, Ying, and Leskovec]{hamilton2017inductive}
Will Hamilton, Zhitao Ying, and Jure Leskovec.
\newblock Inductive representation learning on large graphs.
\newblock \emph{Advances in neural information processing systems}, 30, 2017.

\bibitem[Huang et~al.(2022)Huang, Yamada, Tian, Singh, and Chang]{huang2022graphlime}
Qiang Huang, Makoto Yamada, Yuan Tian, Dinesh Singh, and Yi~Chang.
\newblock Graphlime: Local interpretable model explanations for graph neural networks.
\newblock \emph{IEEE Transactions on Knowledge and Data Engineering}, 2022.

\bibitem[Huang et~al.(2023)Huang, Kosan, Medya, Ranu, and Singh]{huang2023global}
Zexi Huang, Mert Kosan, Sourav Medya, Sayan Ranu, and Ambuj Singh.
\newblock Global counterfactual explainer for graph neural networks.
\newblock In \emph{Proceedings of the Sixteenth ACM International Conference on Web Search and Data Mining}, pp.\  141--149, 2023.

\bibitem[Iwana et~al.(2019)Iwana, Kuroki, and Uchida]{9022542}
Brian~Kenji Iwana, Ryohei Kuroki, and Seiichi Uchida.
\newblock Explaining convolutional neural networks using softmax gradient layer-wise relevance propagation.
\newblock In \emph{2019 IEEE/CVF International Conference on Computer Vision Workshop (ICCVW)}, pp.\  4176--4185, 2019.
\newblock \doi{10.1109/ICCVW.2019.00513}.

\bibitem[Kazius et~al.(2005)Kazius, McGuire, and Bursi]{kazius2005derivation}
Jeroen Kazius, Ross McGuire, and Roberta Bursi.
\newblock Derivation and validation of toxicophores for mutagenicity prediction.
\newblock \emph{Journal of medicinal chemistry}, 48\penalty0 (1):\penalty0 312--320, 2005.

\bibitem[Kipf \& Welling(2017)Kipf and Welling]{kipf2017semisupervised}
Thomas~N. Kipf and Max Welling.
\newblock Semi-supervised classification with graph convolutional networks.
\newblock In \emph{International Conference on Learning Representations}, 2017.

\bibitem[Lin et~al.(2021)Lin, Lan, and Li]{lin2021generative}
Wanyu Lin, Hao Lan, and Baochun Li.
\newblock Generative causal explanations for graph neural networks.
\newblock In \emph{International Conference on Machine Learning}, pp.\  6666--6679. PMLR, 2021.

\bibitem[Lucic et~al.(2022)Lucic, Ter~Hoeve, Tolomei, De~Rijke, and Silvestri]{lucic2022cf}
Ana Lucic, Maartje~A Ter~Hoeve, Gabriele Tolomei, Maarten De~Rijke, and Fabrizio Silvestri.
\newblock Cf-gnnexplainer: Counterfactual explanations for graph neural networks.
\newblock In \emph{International Conference on Artificial Intelligence and Statistics}, pp.\  4499--4511. PMLR, 2022.

\bibitem[Luo et~al.(2020)Luo, Cheng, Xu, Yu, Zong, Chen, and Zhang]{luo2020parameterized}
Dongsheng Luo, Wei Cheng, Dongkuan Xu, Wenchao Yu, Bo~Zong, Haifeng Chen, and Xiang Zhang.
\newblock Parameterized explainer for graph neural network.
\newblock \emph{Advances in neural information processing systems}, 33:\penalty0 19620--19631, 2020.

\bibitem[Pei et~al.(2020)Pei, Yu, and Tian]{pei2020amalnet}
Xinjun Pei, Long Yu, and Shengwei Tian.
\newblock Amalnet: A deep learning framework based on graph convolutional networks for malware detection.
\newblock \emph{Computers \& Security}, 93:\penalty0 101792, 2020.

\bibitem[Pires et~al.(2015)Pires, Blundell, and Ascher]{pires2015pkcsm}
Douglas~EV Pires, Tom~L Blundell, and David~B Ascher.
\newblock pkcsm: predicting small-molecule pharmacokinetic and toxicity properties using graph-based signatures.
\newblock \emph{Journal of medicinal chemistry}, 58\penalty0 (9):\penalty0 4066--4072, 2015.

\bibitem[Pope et~al.(2019)Pope, Kolouri, Rostami, Martin, and Hoffmann]{pope2019explainability}
Phillip~E Pope, Soheil Kolouri, Mohammad Rostami, Charles~E Martin, and Heiko Hoffmann.
\newblock Explainability methods for graph convolutional neural networks.
\newblock In \emph{Proceedings of the IEEE/CVF conference on computer vision and pattern recognition}, pp.\  10772--10781, 2019.

\bibitem[Schlichtkrull et~al.(2021)Schlichtkrull, De~Cao, and Titov]{schlichtkrull2020interpreting}
Michael~Sejr Schlichtkrull, Nicola De~Cao, and Ivan Titov.
\newblock Interpreting graph neural networks for nlp with differentiable edge masking.
\newblock In \emph{International Conference on Learning Representations}, 2021.

\bibitem[Schnake et~al.(2021)Schnake, Eberle, Lederer, Nakajima, Sch{\"u}tt, M{\"u}ller, and Montavon]{schnake2021higher}
Thomas Schnake, Oliver Eberle, Jonas Lederer, Shinichi Nakajima, Kristof~T Sch{\"u}tt, Klaus-Robert M{\"u}ller, and Gr{\'e}goire Montavon.
\newblock Higher-order explanations of graph neural networks via relevant walks.
\newblock \emph{IEEE transactions on pattern analysis and machine intelligence}, 44\penalty0 (11):\penalty0 7581--7596, 2021.

\bibitem[Shan et~al.(2021)Shan, Shen, Zhang, Li, and Li]{shan2021reinforcement}
Caihua Shan, Yifei Shen, Yao Zhang, Xiang Li, and Dongsheng Li.
\newblock Reinforcement learning enhanced explainer for graph neural networks.
\newblock \emph{Advances in Neural Information Processing Systems}, 34:\penalty0 22523--22533, 2021.

\bibitem[Shrikumar et~al.(2017)Shrikumar, Greenside, and Kundaje]{shrikumar2017learning}
Avanti Shrikumar, Peyton Greenside, and Anshul Kundaje.
\newblock Learning important features through propagating activation differences.
\newblock In \emph{International conference on machine learning}, pp.\  3145--3153. PMLR, 2017.

\bibitem[Sundararajan et~al.(2017)Sundararajan, Taly, and Yan]{sundararajan2017axiomatic}
Mukund Sundararajan, Ankur Taly, and Qiqi Yan.
\newblock Axiomatic attribution for deep networks.
\newblock In \emph{International conference on machine learning}, pp.\  3319--3328. PMLR, 2017.

\bibitem[Tan et~al.(2022)Tan, Geng, Fu, Ge, Xu, Li, and Zhang]{tan2022learning}
Juntao Tan, Shijie Geng, Zuohui Fu, Yingqiang Ge, Shuyuan Xu, Yunqi Li, and Yongfeng Zhang.
\newblock Learning and evaluating graph neural network explanations based on counterfactual and factual reasoning.
\newblock In \emph{Proceedings of the ACM Web Conference 2022}, pp.\  1018--1027, 2022.

\bibitem[Vu \& Thai(2020)Vu and Thai]{vu2020pgm}
Minh Vu and My~T Thai.
\newblock Pgm-explainer: Probabilistic graphical model explanations for graph neural networks.
\newblock \emph{Advances in neural information processing systems}, 33:\penalty0 12225--12235, 2020.

\bibitem[Wang et~al.(2021)Wang, Zhang, Zhou, Cui, Fang, Jia, Fang, and Qi]{wang2021temporal}
Daixin Wang, Zhiqiang Zhang, Jun Zhou, Peng Cui, Jingli Fang, Quanhui Jia, Yanming Fang, and Yuan Qi.
\newblock Temporal-aware graph neural network for credit risk prediction.
\newblock In \emph{Proceedings of the 2021 SIAM International Conference on Data Mining (SDM)}, pp.\  702--710. SIAM, 2021.

\bibitem[Wu et~al.(2022)Wu, Cui, Pei, and Zhao]{wu2022graph}
Lingfei Wu, Peng Cui, Jian Pei, and Liang Zhao.
\newblock \emph{Graph Neural Networks: Foundations, Frontiers, and Applications}.
\newblock Springer Nature, 2022.

\bibitem[Xie et~al.(2022)Xie, Katariya, Tang, Huang, Rao, Subbian, and Ji]{xie2022task}
Yaochen Xie, Sumeet Katariya, Xianfeng Tang, Edward Huang, Nikhil Rao, Karthik Subbian, and Shuiwang Ji.
\newblock Task-agnostic graph explanations.
\newblock \emph{Advances in Neural Information Processing Systems}, 35:\penalty0 12027--12039, 2022.

\bibitem[Xu et~al.(2019)Xu, Hu, Leskovec, and Jegelka]{xupowerful}
Keyulu Xu, Weihua Hu, Jure Leskovec, and Stefanie Jegelka.
\newblock How powerful are graph neural networks?
\newblock In \emph{International Conference on Learning Representations}, 2019.

\bibitem[Ying et~al.(2019)Ying, Bourgeois, You, Zitnik, and Leskovec]{ying2019gnnexplainer}
Zhitao Ying, Dylan Bourgeois, Jiaxuan You, Marinka Zitnik, and Jure Leskovec.
\newblock Gnnexplainer: Generating explanations for graph neural networks.
\newblock \emph{Advances in neural information processing systems}, 32, 2019.

\bibitem[Yuan et~al.(2021)Yuan, Yu, Wang, Li, and Ji]{yuan2021explainability}
Hao Yuan, Haiyang Yu, Jie Wang, Kang Li, and Shuiwang Ji.
\newblock On explainability of graph neural networks via subgraph explorations.
\newblock In \emph{International Conference on Machine Learning}, pp.\  12241--12252. PMLR, 2021.

\bibitem[Yuan et~al.(2022)Yuan, Yu, Gui, and Ji]{yuan2022explainability}
Hao Yuan, Haiyang Yu, Shurui Gui, and Shuiwang Ji.
\newblock Explainability in graph neural networks: A taxonomic survey.
\newblock \emph{IEEE Transactions on Pattern Analysis and Machine Intelligence}, 2022.

\bibitem[Zhang et~al.(2022)Zhang, Liu, Shah, and Sun]{zhang2022gstarx}
Shichang Zhang, Yozen Liu, Neil Shah, and Yizhou Sun.
\newblock Gstarx: Explaining graph neural networks with structure-aware cooperative games.
\newblock \emph{Advances in Neural Information Processing Systems}, 35:\penalty0 19810--19823, 2022.

\bibitem[Zhang et~al.(2021)Zhang, Defazio, and Ramesh]{zhang2021relex}
Yue Zhang, David Defazio, and Arti Ramesh.
\newblock Relex: A model-agnostic relational model explainer.
\newblock In \emph{Proceedings of the 2021 AAAI/ACM Conference on AI, Ethics, and Society}, pp.\  1042--1049, 2021.

\bibitem[Zhao et~al.(2023)Zhao, Luo, Zhang, and Wang]{zhao2023towards}
Tianxiang Zhao, Dongsheng Luo, Xiang Zhang, and Suhang Wang.
\newblock Towards faithful and consistent explanations for graph neural networks.
\newblock In \emph{Proceedings of the Sixteenth ACM International Conference on Web Search and Data Mining}, pp.\  634--642, 2023.

\end{thebibliography}
\bibliographystyle{iclr2024_conference}

\appendix
\section{Proof of Lemma 2}
\begin{proof} We prove by contradiction. 
    Suppose Lemma 2 is false, then either of the following shall hold:
    \begin{itemize}
        \item[i)] There exists two variables $\{X_i, X_j\} \subset \mathbf{X}$, which are not equally contribute to $g(\mathbf{X})$ at $(x_i,x_j)$ with respect to $(0,0)$;
        \item[ii)] There are two variables ${X_i, X_j} \subset \mathbf{X}$ that cannot have $X_i=x_i$ and $X_j=0$ simultaneously, or $X_i=0$ and $X_j=x_j$ simultaneously.
    \end{itemize}
    We consider contradiction to each of the above statements separately.
    
    i) According to Definition 1, if there exists two variables $\{X_i, X_j\} \subset \mathbf{X}$, which are not equally contribute to $g(\mathbf{X})$ at $(x_i,x_j)$ with respect to $(0,0)$, we should have at least one assignment of $\mathbf{X}$ except for $X_i$ and $X_j$ such that
    \begin{equation*}
        g_{X_i=x_i, X_j=0}(\mathbf{X} \backslash \{X_i,X_j\}) \neq g_{X_i=0, X_j=x_j}(\mathbf{X} \backslash \{X_i,X_j\}).
    \end{equation*}
    However, since 
    \begin{equation*}
        g_{X_i=x_i, E_j=0}(\mathbf{X} \backslash \{X_i,X_j\}) = g_{X_i=0, X_j=x_j}(\mathbf{X} \backslash \{X_i,X_j\}) = 0,
    \end{equation*}
    we have reached a contradiction to the first statement.

    ii) As all the variables are uncorrelated, the value assigned to one variable has no impact on how we choose values for the other variables. Therefore, it is possible to have $X_i=x_i$ and $X_j=0$ simultaneously or to have $X_i=0$ and $X_j=x_j$ simultaneously for arbitrary two variables $X_i$ and $X_j$. We have reached a contradiction to the second statement.
\end{proof}

\section{Proof of Lemma 4}
\begin{proof}
    To prove Lemma 4, we first prove the following Lemma. 
    \begin{lemma}
        \label{lemma:free_range}
          For a scalar product term $z$ in the expansion form of a pretrained GNN $f(\cdot)$, when the number of nodes $N$ is large, it is feasible to have both $\nu_i=x_i$ and $\nu_j=0$ or both $\nu_i=0$ and $\nu_j=x_j$ for all possible pairs $\nu_i,\nu_j$ of variables in $z$, where $x_i, x_j$ indicate the presence of variables $\nu_i,\nu_j$, respectively.
    \end{lemma}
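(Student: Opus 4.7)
The plan is to classify the variables appearing in a scalar product $z$ into three kinds---adjacency entries from $A$, node-feature entries from $X$, and activation-pattern entries $P^{(l)}_{a,b}$---and verify the feasibility claim pairwise. For any two variables $\nu_i, \nu_j$ the goal is to exhibit a graph instance $(A,X)$ that realizes $\nu_i = x_i$ together with $\nu_j = 0$, and symmetrically for the swapped assignment.

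When both $\nu_i$ and $\nu_j$ lie in $A \cup X$, the claim is essentially trivial: distinct adjacency or node-feature entries are independent coordinates of the input space, so one can be set to its ``present'' value and the other to zero by direct construction. I would dispose of this case in a single sentence before turning to the interesting case.

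The substantive step is the case where at least one of $\nu_i, \nu_j$ is an activation-pattern entry $P^{(l)}_{a,b}$, since this quantity is a deterministic function of $(A,X)$ through the fixed parameters and is not directly assignable. Here I would invoke the large-$N$ hypothesis. Concretely, $P^{(l)}_{a,b}$ is nonzero iff the pre-activation $H^{(l)\prime}_{a,b}$ is positive (for ReLU), and $H^{(l)\prime}_{a,b}$ is an additive aggregation over the $l$-hop neighborhood of node $v_a$. When $N$ is large, this neighborhood contains many nodes, hence many input coordinates, so there are abundant degrees of freedom that do not touch the specific entries we are trying to freeze. I would use these ``spare'' inputs to tune the sign of each relevant pre-activation, thereby toggling the corresponding activation pattern to $0$ or to its observed value at will, without altering the assignment of $\nu_i$ or $\nu_j$ as input variables.

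The main obstacle will be the subcase $\nu_i = P^{(l)}_{a,b}$ and $\nu_j = P^{(l')}_{a',b'}$ in which the two receptive fields overlap, because a single shared input feature then influences both pre-activations simultaneously. The large-$N$ assumption is meant to absorb exactly this difficulty: as $N$ grows, the symmetric difference of the $l$-hop neighborhood of $v_a$ and the $l'$-hop neighborhood of $v_{a'}$ is large enough to provide non-shared degrees of freedom that can be used to set the sign of each pre-activation independently. The technical heart of the argument will therefore be a bound on the size of this non-overlap for generic large graphs; once that bound is in hand, the two desired configurations follow by an explicit construction that fixes the shared coordinates to whatever is needed for $\nu_i, \nu_j$ and then calibrates the non-shared coordinates to control each activation pattern. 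This construction then feeds directly into the proof of Lemma~4 via the contradiction scheme used in Lemma~2.
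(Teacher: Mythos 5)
Your strategy is the right one in spirit---use the abundance of unconstrained inputs at large $N$ to toggle each activation pattern independently of the variables being frozen---but the proposal leaves exactly the load-bearing step as a placeholder (``a bound on the size of this non-overlap for generic large graphs''), and the way you frame that step would make it harder than it needs to be. The framing issue is this: you treat the receptive field of $P^{(l)}_{a,b}$ as the $l$-hop neighborhood of $v_a$ in a fixed graph, and then worry about overlaps of neighborhoods for ``generic'' graphs. But in the paper's setting the adjacency entries are themselves variables of the expansion, and the lemma is an existential feasibility claim over joint assignments of $(A, X, P)$; you get to \emph{choose} the assignment, so ``generic graph'' is the wrong quantifier, and the neighborhood structure is not a fixed object you must bound against. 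The correct object to analyze is the formal expansion of the pre-activation: $h^{(l)\prime}_{c,e}$ is a sum of scalar products over \emph{all} index tuples, of which the terms touching the variables of $z$ are a tiny subset.

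The paper's proof makes this precise by a direct count rather than a neighborhood-overlap estimate. Writing $c(h^{(l)\prime}_{c,e},z)$ for the sub-sum of scalar products involving any variable of $z$, it suffices that $h^{(l)\prime}_{c,e}-c(h^{(l)\prime}_{c,e},z)$ retains at least one wholly unconstrained scalar product, since then the residual ranges over $\mathbb{R}$ and $P_{c,e}$ can be set to $0$ or its active value with every variable of $z$ held fixed. The count shows the constrained fractions vanish: the terms containing $X_{i,j}$ are a fraction $\frac{1}{Nd}$ of all terms, the terms containing a given $A_{a,b}$ are a fraction on the order of $\binom{l}{1}N^{l-1}/N^{l+1}+\cdots$, which tends to $0$ since $N\gg l$, and the terms containing a given $P_{g,h}$ are a fraction $\frac{1}{N^l d}$. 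Note that this single statement---one activation-pattern variable can be driven anywhere while \emph{all} other variables of $z$ are pinned---subsumes every pairwise subcase in your plan at once, including your hard subcase of two activation patterns with overlapping influence; no symmetric-difference bound is ever needed. Your trivial case (both variables in $A\cup X$) matches the paper's appeal to uncorrelated input entries, but as written your proof attempt is incomplete: the ``technical heart'' you defer is the entire content of the lemma, and the receptive-field route you sketch for it does not engage with the fact that $A$ is variable rather than given.
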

    \begin{proof}
        We first show that in the scenario of a large number of nodes $N$, an arbitrary variable $P_{c,e}$ among the variables in $z$ can take any value within its domain while keeping all other variables in $z$ fixed to certain values. 
    
        We begin with defining notations. For a scalar product term $z$ in the expansion of a pretrained GNN $f(\cdot)$, we let $\mathbf{U} = A^{(L)}_{\alpha_{L0},\alpha_{L1}} \dots A^{(1)}_{\alpha_{10},\alpha_{11}}$ denotes the factors involving the adjacency matrix $A$, $\mathbf{P}= P^{(1)}_{\alpha_{10},\beta_{11}}\dots P^{(L)}_{\alpha_{L0},\beta_{L1}} \cdot P^{(\mathrm{c}_1)}_{\alpha_{L0},\gamma_{11}}\dots P^{(\mathrm{c}_{(M-1)})}_{\alpha_{L0},\gamma_{(M-1)1}} $ denotes the factors involving the activation pattern $P$, and $\mathbb{C}= W^{(1)}_{\beta_{10},\beta_{11}}\dots W^{(L)}_{\beta_{L0},\beta_{L1}} \cdot W^{(\mathrm{c}_1)}_{\gamma_{10},\gamma_{11}}\dots W^{(\mathrm{c}_M)}_{\gamma_{M0},\gamma_{M1} }$ stands for ``reduced to constant'' denoting the product of the related parameters in $f(\cdot)$, such that each product term can be rewritten as $z=\mathbf{UP}X_{i,j}\mathbb{C}$. 
    
        The entries in an activation pattern are determined by the hidden representation before being passed to the activation function. Similar to Equation (9), the $(c,e)$-th entry of the hidden representation at the $l$-th layer before the activation function can be expressed by the sum of all the related scalar products as
        \begin{equation*}
            h^{(l)\prime}_{c,e} = \sum^{\alpha,\beta,\rho} \left[\left(A^{(l)}_{c,\alpha_{l1}}W^{(l)}_{\beta_{l0},e}\prod^{l-1}_{m=1} {P^{(m)}_{\alpha_{m0},\beta_{m1}}A^{(m)}_{\alpha_{m0},\alpha_{m1}}}W^{(m)}_{\beta_{m0},\beta_{m1}}\right )X_{\rho_{m0},\rho_{m1}}\right].
        \end{equation*}
        When none of the variables in $h^{(l)\prime}_{c,e}$ are constrained, the mathematical range of $h^{(l)\prime}_{c,e}$ is $\mathbb{R}$. Let $c(h^{(l)\prime}_{c,e},z)$ be the sum of scalar products involving the variables in $z$. If the range of $(h^{(l)\prime}_{c,e}-c(h^{(l)\prime}_{c,e},z))$ is also $\mathbb{R}$ when none of the variables in it is constrained, then $P_{c,e}$ can take any value within its domain while keeping the variables in $z$ fixed to some certain values. In other words, if there is at least one scalar product in $(h^{(l)\prime}_{c,e}-c(h^{(l)\prime}_{c,e},z))$, then $P_{c,e}$ can take any value within its domain while holding the variables in $z$ fixed to certain values. 
        
        The sum of scalar products involving $X_{i,j}$ is
        \begin{equation*}
            c(h^{(l)\prime}_{c,e},X_{i,j}) = \sum^{\alpha,\beta} \left[\left(A^{(l)}_{c,\alpha_{l1}}W^{(l)}_{\beta_{l0},e}\prod^{l-1}_{m=1} {P^{(m)}_{\alpha_{m0},\beta_{m1}}A^{(m)}_{\alpha_{m0},\alpha_{m1}}}W^{(m)}_{\beta_{m0},\beta_{m1}}\right )X_{i,j}\right].
        \end{equation*}
        The sum of scalar products involving an arbitrary variable $A_{a,b}$ in $\mathbf{U}$ is
        \begin{equation*}
            \begin{split}
            c(h^{(l)\prime}_{c,e},A_{a,b}) = & \sum^{\alpha,\beta,\rho} \left[\left(A^{(l)}_{c,\alpha_{l1}}W^{(l)}_{\beta_{l0},e}\prod^{l-1}_{m=1} {P^{(m)}_{\alpha_{m0},\beta_{m1}}A^{(m)}_{\alpha_{m0},\alpha_{m1}}}W^{(m)}_{\beta_{m0},\beta_{m1}}\right )X_{\rho_{m0},\rho_{m1}}\right], \mathrm{where} \\
            & \mathrm{at\,least\,one\,of\,} \{A^{(l)}_{c,\alpha_{l1}}, A^{(l-1)}_{\alpha_{(l-1)0},\alpha_{(l-1)1}}, \dots, A^{(1)}_{\alpha_{10},\alpha_{11}}\} \mathrm{\,is\,} A_{a,b}.
            \end{split}
        \end{equation*}
        The sum of scalar products involving an arbitrary variable $P_{g,h}$ in $\mathbf{P}$ is
        \begin{equation*}
            \begin{split}
                c(h^{(l)\prime}_{c,e},P_{g,h}) = & \sum^{\alpha,\beta,\rho} \left[\left(A^{(l)}_{c,\alpha_{l1}}W^{(l)}_{\beta_{l0},e}\prod^{l-1}_{m=1} {P^{(m)}_{\alpha_{m0},\beta_{m1}}A^{(m)}_{\alpha_{m0},\alpha_{m1}}}W^{(m)}_{\beta_{m0},\beta_{m1}}\right )X_{\rho_{m0},\rho_{m1}}\right], \mathrm{where} \\
                & \mathrm{one\,of\,} \{P^{(l-1)}_{\alpha_{(l-1)0},\beta_{(l-1)1}}, \dots, P^{(1)}_{\alpha_{10},\beta_{11}}\} \mathrm{\,is\,} P_{g,h}.
            \end{split}
        \end{equation*}
        Then the number of scalar products in $(h^{(l)\prime}_{c,e}-c(h^{(l)\prime}_{c,e},z))$ is
        \begin{equation*}
            |h^{(l)\prime}_{c,e}-c(h^{(l)\prime}_{c,e},z)| \geq |h^{(l)\prime}_{c,e}| - |c(h^{(l)\prime}_{c,e},X_{i,j})| - l\cdot|c(h^{(l)\prime}_{c,e},A_{a,b})| - (l-1)\cdot|c(h^{(l)\prime}_{c,e},P_{g,h})|, 
        \end{equation*}
        where $|h^{(l)\prime}_{c,e}|,|c(h^{(l)\prime}_{c,e},X_{i,j})|,|c(h^{(l)\prime}_{c,e},A_{a,b})|, |c(h^{(l)\prime}_{c,e},P_{g,h})|$ represents the number of scalar products in $h^{(l)\prime}_{c,e},c(h^{(l)\prime}_{c,e},X_{i,j}),c(h^{(l)\prime}_{c,e},A_{a,b}), c(h^{(l)\prime}_{c,e},P_{g,h})$ respectively. Hence if we can prove $|h^{(l)\prime}_{c,e}| - |c(h^{(l)\prime}_{c,e},X_{i,j})| - l\cdot|c(h^{(l)\prime}_{c,e},A_{a,b})| - (l-1)\cdot|c(h^{(l)\prime}_{c,e},P_{g,h})|\geq1$, then we will also have $|h^{(l)\prime}_{c,e}-c(h^{(l)\prime}_{c,e},z)|\geq1$ proved. That is, we should prove 
        \begin{equation*}
            \frac{|h^{(l)\prime}_{c,e}|}{|h^{(l)\prime}_{c,e}|} - \frac{|c(h^{(l)\prime}_{c,e},X_{i,j})|}{|h^{(l)\prime}_{c,e}|} - l\cdot\frac{|c(h^{(l)\prime}_{c,e},A_{a,b})|}{|h^{(l)\prime}_{c,e}|} - (l-1)\cdot\frac{|c(h^{(l)\prime}_{c,e},P_{g,h})|}{|h^{(l)\prime}_{c,e}|}\geq\frac{1}{|h^{(l)\prime}_{c,e}|}. 
        \end{equation*}
        Equivalently, we should prove
        \begin{equation*}
            \frac{|c(h^{(l)\prime}_{c,e},X_{i,j})|}{|h^{(l)\prime}_{c,e}|} + l\cdot\frac{|c(h^{(l)\prime}_{c,e},A_{a,b})|}{|h^{(l)\prime}_{c,e}|} + (l-1)\cdot\frac{|c(h^{(l)\prime}_{c,e},P_{g,h})|}{|h^{(l)\prime}_{c,e}|} \leq1-\frac{1}{|h^{(l)\prime}_{c,e}|}. 
        \end{equation*}
        Note that the first term 
        \begin{equation*}
            \frac{|c(h^{(l)\prime}_{c,e},X_{i,j})|}{|h^{(l)\prime}_{c,e}|}=\frac{1}{Nd}.
        \end{equation*}
        Since $d\geq 1$ is the feature dimension of $X$, we have $\lim_{N\rightarrow\infty}\frac{|c(h^{(l)\prime}_{c,e},X_{i,j})|}{|h^{(l)\prime}_{c,e}|}=0$.
        
        Now consider the second term $l\cdot\frac{|c(h^{(l)\prime}_{c,e},A_{a,b})|}{|h^{(l)\prime}_{c,e}|}$:
        \begin{equation*}
            \begin{split}
                l\cdot\frac{|c(h^{(l)\prime}_{c,e},A_{a,b})|}{|h^{(l)\prime}_{c,e}|}& = l\cdot\frac{\binom{l}{1}N^{(l-1)}+\binom{l}{2}N^{(l-2)}+\dots+\binom{l}{l}N^0}{N^{(l+1)}} \\
                & = l\cdot\left(\frac{\binom{l}{1}}{N^2}+\frac{\binom{l}{2}}{N^3}+\dots+\frac{\binom{l}{l}}{{N^{(l+1)}}} \right)\\
                & = l\cdot\left(\frac{l!}{1!(l-1)!\cdot N^2}+\frac{l!}{2!(l-2)!\cdot N^3}+\dots+\frac{l!}{l!(l-l)!\cdot{N^{(l+1)}}}\right)\\
                & = \left(\frac{l}{1!N}\cdot\frac{l}{N}\right)+\left(\frac{l}{2!N}\cdot\frac{l}{N}\cdot\frac{l-1}{N}\right) + \dots+ \left(\frac{l}{l!N}\cdot\frac{l}{N}\cdot\frac{l-1}{N}\dots \frac{1}{N}\right).
            \end{split}
        \end{equation*}
        Since $N\gg l$, we have $\lim_{N\rightarrow\infty}\frac{l}{N}=0$. Also, because $\frac{l}{N}>\frac{l-1}{N}>\dots>\frac{1}{N}>\frac{1}{1!N}>\dots>\frac{1}{l!N}$, we then have $\lim_{N\rightarrow\infty} l\cdot\frac{|c(h^{(l)\prime}_{c,e},A_{a,b})|}{|h^{(l)\prime}_{c,e}|}=0$. 
        
        Now we consider the third term $(l-1)\cdot\frac{|c(h^{(l)\prime}_{c,e},P_{g,h})|}{|h^{(l)\prime}_{c,e}|}$: 
        \begin{equation*}
            \begin{split}
                & (l-1)\cdot\frac{|c(h^{(l)\prime}_{c,e},P_{g,h})|}{|h^{(l)\prime}_{c,e}|} = (l-1)\cdot\frac{1}{N^ld}
            \end{split}
        \end{equation*}
        Since $N\gg l$ and $d\geq1$, we have $\lim_{N\rightarrow\infty} (l-1) \cdot\frac{|c(h^{(l)\prime}_{c,e},P_{g,h})|}{|h^{(l)\prime}_{c,e}|} = 0$.
        
        For the terms on the right hand side of the inequality, since $\frac{1}{|h^{(l)\prime}_{c,e}|} \leq \frac{|c(h^{(l)\prime}_{c,e},X_{i,j})|}{|h^{(l)\prime}_{c,e}|}$, we have $\lim_{N\rightarrow\infty} \frac{1}{|h^{(l)\prime}_{c,e}|}=0$. Hence $\lim_{N\rightarrow\infty} 1-\frac{1}{|h^{(l)\prime}_{c,e}|}=1$. 
        
        Since $0<1$, we have prove that when $N$ is large, $\frac{|c(h^{(l)\prime}_{c,e},X_{i,j})|}{|h^{(l)\prime}_{c,e}|} + l\cdot\frac{|c(h^{(l)\prime}_{c,e},A_{a,b})|}{|h^{(l)\prime}_{c,e}|} + (l-1)\cdot\frac{|c(h^{(l)\prime}_{c,e},P_{g,h})|}{|h^{(l)\prime}_{c,e}|} \leq1-\frac{1}{|h^{(l)\prime}_{c,e}|}$. Therefore, in scenarios with a large number of nodes $N$, an arbitrary variable $P_{c,e}$ in $\mathbf{P}$ can take any value within its domain while keeping all other variables in $z$ fixed to certain values. 
        
        If the data is properly preprocessed, a feature $X_{i,j}$ and the unique entries in $A$ should be uncorrelated with each other. Also, from the above proof we can conclude that when $N$ is large, any arbitrary variables in $z$ can be freely set to ``\emph{absence}'' or ``\emph{present}'' without affecting other variables. That is, in scenarios with a large number of nodes $N$, it is always feasible to hold 
        \begin{itemize}
            \item both $X_{i,j}=0$ and $A_{a,b}=A_{a,b}$, as well as both $X_{i,j}=x_{i,j}$ and $A_{a,b}=0$; 
            \item both $A_{k,n}=0$ and $A_{a,b}=A_{a,b}$, as well as both $A_{k,n}=A_{k,n}$ and $A_{a,b}=0$; 
            \item both $X_{i,j}=0$ and $P_{c,e}=p_{c,e}$, as well as both $X_{i,j}=x_{i,j}$ and $P_{c,e}=0$; 
            \item both $A_{a,b}=0$ and $P_{c,e}=p_{c,e}$, as well as both $A_{a,b}=A_{a,b}$ and $P_{c,e}=0$;
            \item both $P_{c,e}=0$ and $P_{g,h}=p_{g,h}$, as well as $P_{c,e}=p_{c,e}$ and $P_{g,h}=0$,
        \end{itemize}
        without affecting other variables in $z$, where $X_{i,j},A_{a,b},A_{k,n}, P_{c,e},P_{g,h}$ refers to the variables in $z$. Hence we have proved Lemma~\ref{lemma:free_range}.
    \end{proof}

    Next, we prove by contradiction that for all possible variable pairs $(\nu_i,\nu_j)$ among the unique variables in $z$, we have $(\nu_i, \nu_j)$ contribute equally to z at $(x_i,x_j)$ with respect to $(0,0)$ , where $\nu_i=x_i, \nu_j=x_j$ means the ``\emph{presence}'' of the variables. 

    Assume there exists two variables $(\nu_i, \nu_j)$ in $V(z)$ that are not equally contribute to $z$ at $(x_i,x_j)$ with respect to $(0,0)$. Then by Definition 1, we should have one assignment of other variables in $z$, such that
    \begin{equation*}
        z_{\nu_i=x_i, \nu_j=0}(V(z) \backslash \{\nu_i,\nu_j\}) \neq z_{\nu_i=0, \nu_j=x_j}(V(z) \backslash \{\nu_i,\nu_j\}).
    \end{equation*}
    However, since 
    \begin{equation*}
        z_{\nu_i=x_i, \nu_j=0}(V(z) \backslash \{\nu_i,\nu_j\}) = z_{\nu_i=0, \nu_j=x_j}(V(z) \backslash \{\nu_i,\nu_j\}) = 0,
    \end{equation*}
    we have reached a contradiction. Hence we have proved Lemma 4. 
\end{proof}

\section{Proof of Theorem 5}
\begin{proof}
    If the total number of unique variables in a scalar product equals to the total number of occurences of all the unique variables, i.e., if $|V(z)|=\sum_{\rho\mathrm{\,in\,}z}{O(\rho,z)}$, we will have $I_\nu(z)=\frac{z}{|V(z)|}=\frac{O(\nu,z)\cdot z}{\sum_{\rho\mathrm{\,in\,}z}{O(\rho,z)}}$. This is because when $|V(z)|=\sum_{\rho\mathrm{\,in\,}z}{O(\rho,z)}$, all the occurrences of variables are unique variables, and we have $O(\nu,z)=1$. Consider $I_\nu(f_{m,n}(\cdot))$ as the sum of two components, which are the contribution $I_\nu(f_{m,n}^{V=O}(\cdot))$ of $\nu$ to the scalar products where $|V(z)|=\sum_{\rho\mathrm{\,in\,}z}{O(\rho,z)}$ holds, and the contribution  $I_\nu(f_{m,n}^{V\neq O}(\cdot))$ of $\nu$ to the scalar products where $|V(z)|=\sum_{\rho\mathrm{\,in\,}z}{O(\rho,z)}$ does not hold. That is, 
    \begin{equation*}
        I_\nu(f_{m,n}(\cdot)) = I_\nu(f_{m,n}^{V=O}(\cdot)) + I_\nu(f_{m,n}^{V\neq O}(\cdot)).
    \end{equation*}
    We are not able to have multiple occurrences of $X$ or $P$ in a scalar product, but only able to have multiple occurrences of $A$. Considering the scalar products are bounded by a value of $c$, we have
    \begin{equation*}
        \begin{split}
            \frac{I_\nu(f_{m,n}^{V\neq O}(\cdot))}{I_\nu(f_{m,n}(\cdot))} & \leq\frac{c\cdot \left[\binom{L}{2}N^{L-1} + \dots + \binom{L}{L}N\right]}{c\cdot N^{L+1}} \\
            & = \frac{L!N^{L-1}}{2!(L-2)!N^{L+1}} + \dots + \frac{L!N}{(L)!0!N^{L+1}}\\
            & = \frac{L(L-1)}{2!N^{2}} + \dots + \frac{1}{N^{L}}\\
            & \leq \frac{L^2}{N^2}+\dots+\frac{L^2}{N^2},
        \end{split}
    \end{equation*}
Since $N\gg L$, we have
    \begin{equation*}
        \begin{split}
     \lim_{N\rightarrow\infty}\frac{I_\nu(f_{m,n}^{V\neq O}(\cdot))}{I_\nu(f_{m,n}(\cdot))} = 0.
        \end{split}
    \end{equation*}
    Therefore, when $N$ is large, $I_\nu(f_{m,n}(\cdot)) = I_\nu(f_{m,n}^{V=O}(\cdot))$. Hence, by Equation (10), we have proved that when $N$ is large, $I_\nu(f_{m,n}(\cdot)) = \sum_{z\mathrm{\,in\,}f_{m,n}(\cdot)\mathrm{\,that\,contain\,} \nu} \frac{O(\nu,z)}{\sum_{\rho\mathrm{\,in\,}z}O(\rho,z)}\cdot z$.
\end{proof}

\section{Case Study: Explaining Graph Isomorphism Network (GIN)}~\label{app:gin}
GIN adopts weighted sum at the COMBINE step, hence the hidden state of a GIN's $l$-th layer is:
\begin{equation}
    H^{(l)}=\Phi^{(l)} \left( \hat{A}H^{(l-1)}+\epsilon^{(l)}H^{(l-1)}\right),
\end{equation}
where $\hat{A}=A+I$ refers to the adjacency matrix with the self-loops, $\epsilon^{(l)}$ is a trainable scalar parameter. If $\Phi^{(l)}(\cdot)$ is a 2-layer MLP, expanding $\Phi^{(l)}$, we have
\begin{equation}
    H^{(l)}=\mathrm{ReLU}\left(\mathrm{ReLU}\left(\hat{A}H^{(l-1)}W^{\Phi^{(l)}_1}+\epsilon^{(l)}H^{(l-1)}W^{\Phi^{(l)}_1} + B^{\Phi^{(l)}_1}\right)W^{\Phi^{(l)}_2} + B^{\Phi^{(l)}_2}\right).
\end{equation}
Suppose a GIN $f(\hat{A},X)$ has three message-passing layers and a 2-layer MLP as the classifier, then its expansion form without the activation functions ReLU(·) will be
\begin{equation}
    \label{eq:gin_expand}
    \small
    \begin{split}
        f(\hat{A},X)_{\not{\mathbf{P}}} &=X\epsilon^{(3)}\epsilon^{(2)}\epsilon^{(1)}W^{\Phi^{(1)}_1}W^{\Phi^{(1)}_2}W^{\Phi^{(2)}_1}W^{\Phi^{(2)}_2}W^{\Phi^{(3)}_1}W^{\Phi^{(3)}_2}W^{(c_1)}W^{(c_2)}\\            &+\hat{A}^{(1)}X\epsilon^{(3)}\epsilon^{(2)}W^{\Phi^{(1)}_1}W^{\Phi^{(1)}_2}W^{\Phi^{(2)}_1}W^{\Phi^{(2)}_2}W^{\Phi^{(3)}_1}W^{\Phi^{(3)}_2}W^{(c_1)}W^{(c_2)}\\        &+\hat{A}^{(2)}X\epsilon^{(3)}\epsilon^{(1)}W^{\Phi^{(1)}_1}W^{\Phi^{(1)}_2}W^{\Phi^{(2)}_1}W^{\Phi^{(2)}_2}W^{\Phi^{(3)}_1}W^{\Phi^{(3)}_2}W^{(c_1)}W^{(c_2)}\\ &+\hat{A}^{(3)}X\epsilon^{(2)}\epsilon^{(1)}W^{\Phi^{(1)}_1}W^{\Phi^{(1)}_2}W^{\Phi^{(2)}_1}W^{\Phi^{(2)}_2}W^{\Phi^{(3)}_1}W^{\Phi^{(3)}_2}W^{(c_1)}W^{(c_2)}\\        &+\hat{A}^{(2)}\hat{A}^{(1)}X\epsilon^{(3)}W^{\Phi^{(1)}_1}W^{\Phi^{(1)}_2}W^{\Phi^{(2)}_1}W^{\Phi^{(2)}_2}W^{\Phi^{(3)}_1}W^{\Phi^{(3)}_2}W^{(c_1)}W^{(c_2)}\\ &+\hat{A}^{(3)}\hat{A}^{(1)}X\epsilon^{(2)}W^{\Phi^{(1)}_1}W^{\Phi^{(1)}_2}W^{\Phi^{(2)}_1}W^{\Phi^{(2)}_2}W^{\Phi^{(3)}_1}W^{\Phi^{(3)}_2}W^{(c_1)}W^{(c_2)}\\ &+\hat{A}^{(3)}\hat{A}^{(2)}X\epsilon^{(1)}W^{\Phi^{(1)}_1}W^{\Phi^{(1)}_2}W^{\Phi^{(2)}_1}W^{\Phi^{(2)}_2}W^{\Phi^{(3)}_1}W^{\Phi^{(3)}_2}W^{(c_1)}W^{(c_2)}\\ &+\hat{A}^{(3)}\hat{A}^{(2)}\hat{A}^{(1)}XW^{\Phi^{(1)}_1}W^{\Phi^{(1)}_2}W^{\Phi^{(2)}_1}W^{\Phi^{(2)}_2}W^{\Phi^{(3)}_1}W^{\Phi^{(3)}_2}W^{(c_1)}W^{(c_2)}\\ &+\epsilon^{(3)}\epsilon^{(2)}B^{\Phi^{(1)}_1}W^{\Phi^{(1)}_2}W^{\Phi^{(2)}_1}W^{\Phi^{(2)}_2}W^{\Phi^{(3)}_1}W^{\Phi^{(3)}_2}W^{(c_1)}W^{(c_2)}\\ &+\hat{A}^{(2)}\epsilon^{(3)}B^{\Phi^{(1)}_1}W^{\Phi^{(1)}_2}W^{\Phi^{(2)}_1}W^{\Phi^{(2)}_2}W^{\Phi^{(3)}_1}W^{\Phi^{(3)}_2}W^{(c_1)}W^{(c_2)}\\ &+\hat{A}^{(3)}\epsilon^{(2)}B^{\Phi^{(1)}_1}W^{\Phi^{(1)}_2}W^{\Phi^{(2)}_1}W^{\Phi^{(2)}_2}W^{\Phi^{(3)}_1}W^{\Phi^{(3)}_2}W^{(c_1)}W^{(c_2)}\\ &+\hat{A}^{(3)}\hat{A}^{(2)}B^{\Phi^{(1)}_1}W^{\Phi^{(1)}_2}W^{\Phi^{(2)}_1}W^{\Phi^{(2)}_2}W^{\Phi^{(3)}_1}W^{\Phi^{(3)}_2}W^{(c_1)}W^{(c_2)}\\         &+\epsilon^{(3)}\epsilon^{(2)}B^{\Phi^{(1)}_2}W^{\Phi^{(2)}_1}W^{\Phi^{(2)}_2}W^{\Phi^{(3)}_1}W^{\Phi^{(3)}_2}W^{(c_1)}W^{(c_2)}\\ &+\hat{A}^{(2)}\epsilon^{(3)}B^{\Phi^{(1)}_2}W^{\Phi^{(2)}_1}W^{\Phi^{(2)}_2}W^{\Phi^{(3)}_1}W^{\Phi^{(3)}_2}W^{(c_1)}W^{(c_2)}\\ &+\hat{A}^{(3)}\epsilon^{(2)}B^{\Phi^{(1)}_2}W^{\Phi^{(2)}_1}W^{\Phi^{(2)}_2}W^{\Phi^{(3)}_1}W^{\Phi^{(3)}_2}W^{(c_1)}W^{(c_2)}\\ &+\hat{A}^{(3)}\hat{A}^{(2)}B^{\Phi^{(1)}_2}W^{\Phi^{(2)}_1}W^{\Phi^{(2)}_2}W^{\Phi^{(3)}_1}W^{\Phi^{(3)}_2}W^{(c_1)}W^{(c_2)}\\         &+\hat{A}^{(3)}B^{\Phi^{(2)}_1}W^{\Phi^{(2)}_2}W^{\Phi^{(3)}_1}W^{\Phi^{(3)}_2}W^{(c_1)}W^{(c_2)}\\ &+\epsilon^{(3)}B^{\Phi^{(2)}_1}W^{\Phi^{(2)}_2}W^{\Phi^{(3)}_1}W^{\Phi^{(3)}_2}W^{(c_1)}W^{(c_2)}\\         &+\hat{A}^{(3)}B^{\Phi^{(2)}_2}W^{\Phi^{(3)}_1}W^{\Phi^{(3)}_2}W^{(c_1)}W^{(c_2)}+\epsilon^{(3)}B^{\Phi^{(2)}_2}W^{\Phi^{(3)}_1}W^{\Phi^{(3)}_2}W^{(c_1)}W^{(c_2)}\\ 
        &+B^{\Phi^{(3)}_1}W^{\Phi^{(3)}_2}W^{(c_1)}W^{(c_2)} +B^{\Phi^{(3)}_2}W^{(c_1)}W^{(c_2)}+B^{(c_1)}W^{(c_2)}+B^{(c_2)}.\\
    \end{split}
\end{equation}
Then similar to the case study on GCN and GraphSAGE, the activation patterns are multiplied to each of the scalar products. Although Equation~(\ref{eq:gin_expand}) may appear complex, we can observe a pattern that when $\hat{A}^{(l)}$ is present in a product term, the corresponding $\epsilon^{(l)}$ is not. This observation allows us to simplify the expression by using for loops to cover all the product terms. The code of explaining GIN on the graph classification task is in the package of Supplementary Material.

\section{Case study: Explaining GraphSAGE (SAmple and aggreGatE)}~\label{app:sage}
GraphSAGE adopts concatenation at the COMBINE step, hence the hidden state of a GraphSAGE's $l$-th layer is
\begin{equation}
    H^{(l)}=\mathrm{ReLU}\left(AH^{(l-1)}W^{(l)\phi}+H^{(l-1)}W^{(l)\psi}+B^{(l)}\right),
\end{equation}
where $W^{(l)\phi}$ and $W^{(l)\psi}$ represents the trainable parameters for concatenating the node information and its neighborhood information. Suppose a GraphSAGE network $f(A,X)$ has three message-passing layers and a 2-layer MLP as the classifier, then its expansion form without the activation functions ReLU(·) will be
\begin{equation}
    \small
    \begin{split}
        f(A,X)_{\not{\mathbf{P}}} & = XW^{(1)\psi}W^{(2)\psi}W^{(3)\psi}W^{(c_1)}W^{(c_2)} + A^{(1)}XW^{(1)\phi}W^{(2)\psi}W^{(3)\psi}W^{(c_1)}W^{(c_2)}\\
        &+A^{(2)}XW^{(1)\psi}W^{(2)\phi}W^{(3)\psi}W^{(c_1)}W^{(c_2)} + A^{(3)}XW^{(1)\psi}W^{(2)\psi}W^{(3)\phi}W^{(c_1)}W^{(c_2)}\\
        &+A^{(2)}A^{(1)}XW^{(1)\phi}W^{(2)\phi}W^{(3)\psi}W^{(c_1)}W^{(c_2)} \\ &+A^{(3)}A^{(1)}XW^{(1)\phi}W^{(2)\psi}W^{(3)\phi}W^{(c_1)}W^{(c_2)} \\
        &+A^{(3)}A^{(2)}XW^{(1)\psi}W^{(2)\phi}W^{(3)\phi}W^{(c_1)}W^{(c_2)} \\
        &+A^{(3)}A^{(2)}A^{(1)}XW^{(1)\phi}W^{(2)\phi}W^{(3)\phi}W^{(c_1)}W^{(c_2)} \\
        &+A^{(2)}B^{(1)}W^{(2)\phi}W^{(3)\psi}W^{(c_1)}W^{(c_2)} +A^{(3)}B^{(1)}W^{(2)\psi}W^{(3)\phi}W^{(c_1)}W^{(c_2)}\\
        &+A^{(3)}A^{(2)}B^{(1)}W^{(2)\phi}W^{(3)\phi}W^{(c_1)}W^{(c_2)} + B^{(1)}W^{(2)\psi}W^{(3)\psi}W^{(c_1)}W^{(c_2)} \\
        &+A^{(3)}B^{(2)}W^{(3)\phi}W^{(c_1)}W^{(c_2)}+B^{(2)}W^{(3)\psi}W^{(c_1)}W^{(c_2)}\\
        &+B^{(3)}W^{(c_1)}W^{(c_2)} + B^{(c_1)}W^{(c_2)} + B^{(c_2)}.\\
    \end{split}
\end{equation}
Then all the other steps will be identical to the case study of GCN. The code of explaining GraphSAGE on the graph classification task is in the package of Supplementary Material.

\section{Handling Batch Normalization Layer}\label{app:batchnorm}
In certain cases, Batch Normalization (BN) may be applied between the message-passing layers. In this section, we will elaborate on how BN layer is handled to provide explantions with {\smodel}. The formula of BN is
\begin{equation}
    y=\frac{x-\mu}{\sqrt{\delta+\varepsilon}}\cdot W+B,
\end{equation}
where $\mu$ is the running mean, $\delta$ is the running variance, $\varepsilon$ is a prefixed small value, $W$, $B$ are learnable parameters. 
During the evaluation mode of a pretrained GNN, $\mu$, $\delta$, $\varepsilon$, $W$ and $B$ are fixed. As a result, we can treat the Batch Normalization (BN) layer as a linear mapping $y=xW^{(\mathrm{BN})}+B^{(\mathrm{BN})}$ while obtaining GNN explanations with {\smodel}, where
\begin{equation}
    W^{(\mathrm{BN})}=\frac{W}{\sqrt{\delta+\varepsilon}}, \, B^{(\mathrm{BN})}=\frac{-\mu\cdot W}{\sqrt{\delta+\varepsilon}}+B.
\end{equation}

\section{Fidelity Results of Explaining GraphSAGE and GIN}~\label{app:fid_sage_gin}
\begin{figure}[h]
\begin{center}
\vskip -0.2in
\centerline{\includegraphics[width=\textwidth]{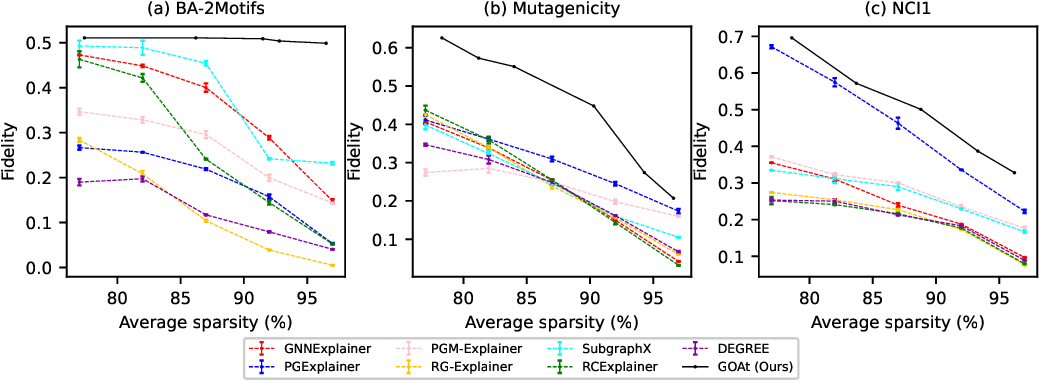}}
\vskip -0.1in
\caption{Fidelity performance averaged across 10 runs on the pretrained GraphSAGE for the datasets at
different levels of average sparsity.} 
\label{fig:sage_fidelity}
\end{center}
\vskip -0.1in
\end{figure}

\begin{figure}[h]
\begin{center}
\vskip -0.2in
\centerline{\includegraphics[width=\textwidth]{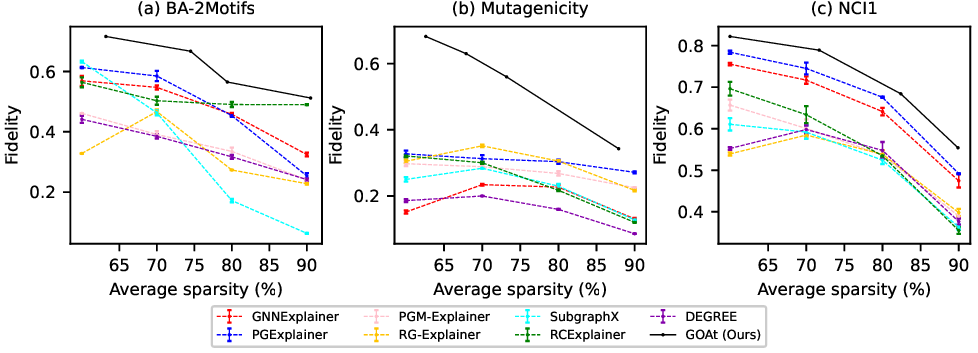}}
\vskip -0.1in
\caption{Fidelity performance averaged across 10 runs on the pretrained GIN for the datasets at
different levels of average sparsity.} 
\label{fig:gin_fidelity}
\end{center}
\vskip -0.2in
\end{figure}

\section{Statistics and Implementation Details}
\label{app:stats}

The GNNs are trained using the following data splits: 80\% for the training set, 10\% for the validation set, and 10\% for the testing set. All experiments are conducted on an Intel® Core™ i7-10700 Processor and NVIDIA GeForce RTX 3090 Graphics Card. The GNN architectures consist of 3 message-passing layers and a 2-layer classifier. The hidden dimension is set to 32 for BA-2Motifs, BA-Shapes, BA-Community, Tree-grid, and 64 for Mutagenicity and NCI1. The code is available in the Supplementary Material, provided alongside this Appendix file.

\begin{table}[h]
\caption{Statistics of the datasets used and the classification accuracy of the trained GNNs.}
\label{tab:stats}
\begin{center}
\scalebox{0.96}{
\begin{tabular}{ll|ccc|cccc}
\toprule
 && BA- & BA- & Tree- & BA- & \multirow{2}{*}{Mutagenicity} & \multirow{2}{*}{NCI1} \\
  && Shapes & Community & Grid & 2Motifs & &\\
\midrule
\multicolumn{2}{l|}{\# Graphs} & 1&1&1& 1,000 &4,337 &4,110 \\
\multicolumn{2}{l|}{\# Nodes (avg)}  & 700 & 1,400 &1,231 &25  & 30.32 & 29.87\\
\multicolumn{2}{l|}{\# Edges (avg)} & 4,110 & 8,920 & 3,410& 25.48 & 30.77 & 32.30\\
\multicolumn{2}{l|}{\# Classes } & 4 & 8 & 2&2&2&2\\
\midrule
\multirow{3}{*}{Test ACC} &GCN &0.97&0.91&0.97& 1.00 & 0.82&0.81\\
&GraphSAGE &  -&-&- &  1.00 & 0.80&0.80\\
&GIN &  -&-&- &  1.00 & 0.89&0.83\\
\bottomrule
\end{tabular}
}
\end{center}
\vskip -0.2in
\end{table}

\section{Visualization of Explanation Embeddings on Mutagenicity and NCI1}
\label{app:visual_matag_nci1}
\begin{figure*}[h]
\begin{center}
\vskip -0.1in
\centerline{\includegraphics[width=.93\textwidth]{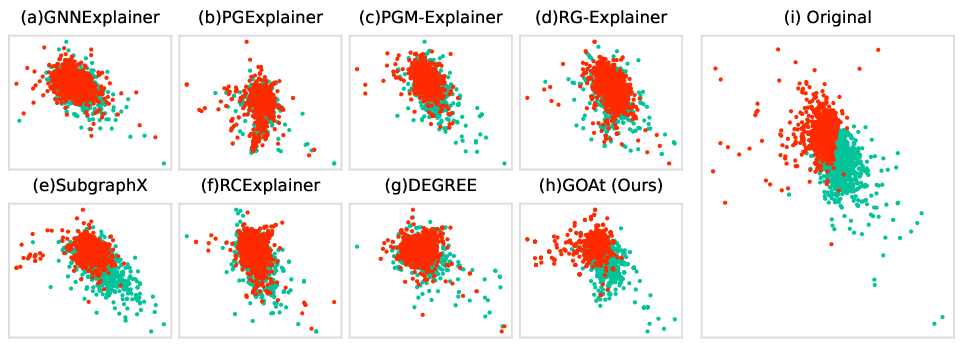}}
\vskip -0.1in
\caption{ Visualization of explanation embeddings on the Mutagenicity dataset. Subfigure (i) refers to the visualization of the original embeddings by directly feeding the original data into the GNN without any modifications or explanations applied. } 
\label{fig:mutagenicity_scatter}
\end{center}
\vskip -0.1in
\end{figure*}

\begin{figure*}[h]
\begin{center}
\vskip -0.2in
\centerline{\includegraphics[width=.93\textwidth]{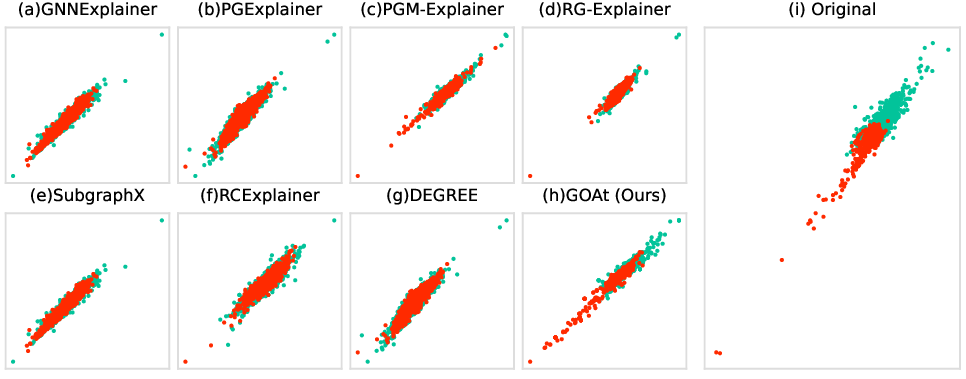}}
\vskip -0.1in
\caption{ Visualization of explanation embeddings on the NCI1 dataset. Subfigure (i) refers to the visualization of the original embeddings by directly feeding the original data into the GNN without any modifications or explanations applied. } 
\label{fig:nci_scatter}
\end{center}
\vskip -0.2in
\end{figure*}

\begin{figure*}[h]
\begin{center}
\centerline{\includegraphics[width=\textwidth]{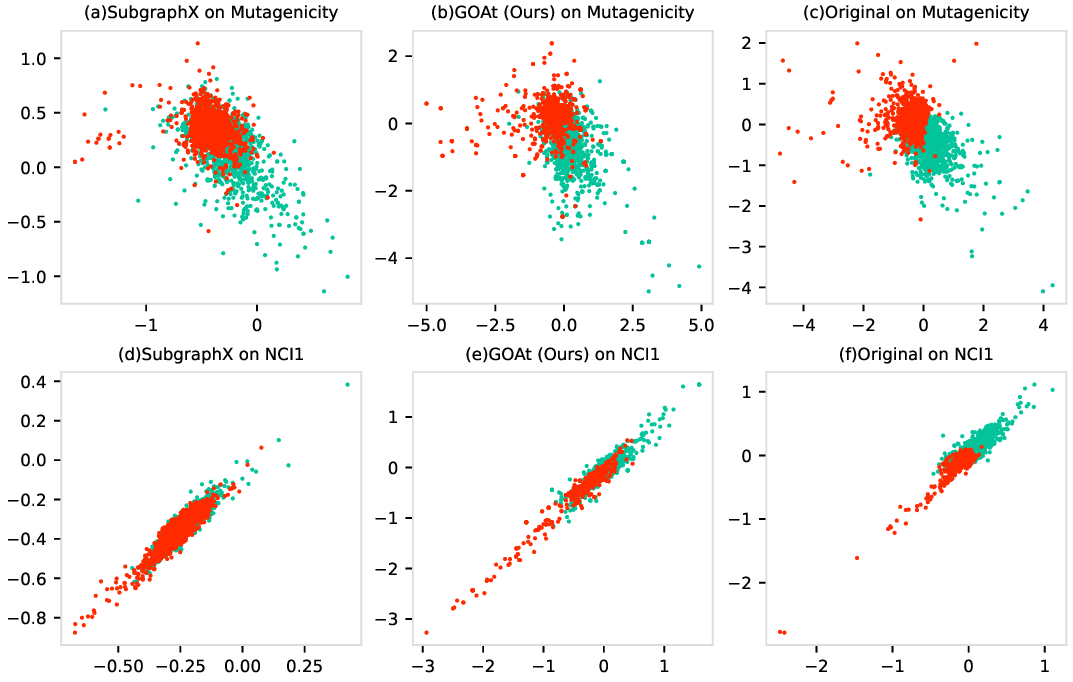}}
\vskip -0.1in
\caption{ Visualization of explanation embeddings on the Mutagenicity and NCI1 datasets with axes turned on. } 
\label{fig:scatter_axis}
\end{center}
\vskip -0.2in
\end{figure*}

Figure~\ref{fig:mutagenicity_scatter} and Figure~\ref{fig:nci_scatter} presented the visualization of explanation embeddings on the Mutagenicity and NCI1 datasets respectively. 
To create smaller plots, we have disabled the axes in Figure~\ref{fig:mutagenicity_scatter} and Figure~\ref{fig:nci_scatter}. In Figure~\ref{fig:scatter_axis}, we have enabled the axes for specific subplots to showcase the dispersion of explanation embeddings from {\smodel} compared to SubgraphX and the original embeddings. Specifically, in the case of Mutagenicity, although the explanations generated by SubgraphX exhibit only some overlap, the scatters for different classes appear quite close. On the other hand, {\smodel} produces more discriminative explanations. For the NCI1 dataset, while the majority of explanations generated by SubgraphX overlap, the explanations from {\smodel} exhibit greater dispersion in the scatter plot. Furthermore, compared to the original embeddings, the explanations generated by {\smodel} for NCI1 demonstrate higher confidence towards specific classes, as evident from the bottom-left area in Figure~\ref{fig:scatter_axis}(e).

\section{Explanations of Node Classification}
\label{app:node_classi}

\begin{figure*}[h]
\begin{center}
\vskip -0.2in
\centerline{\includegraphics[width=.9\textwidth]{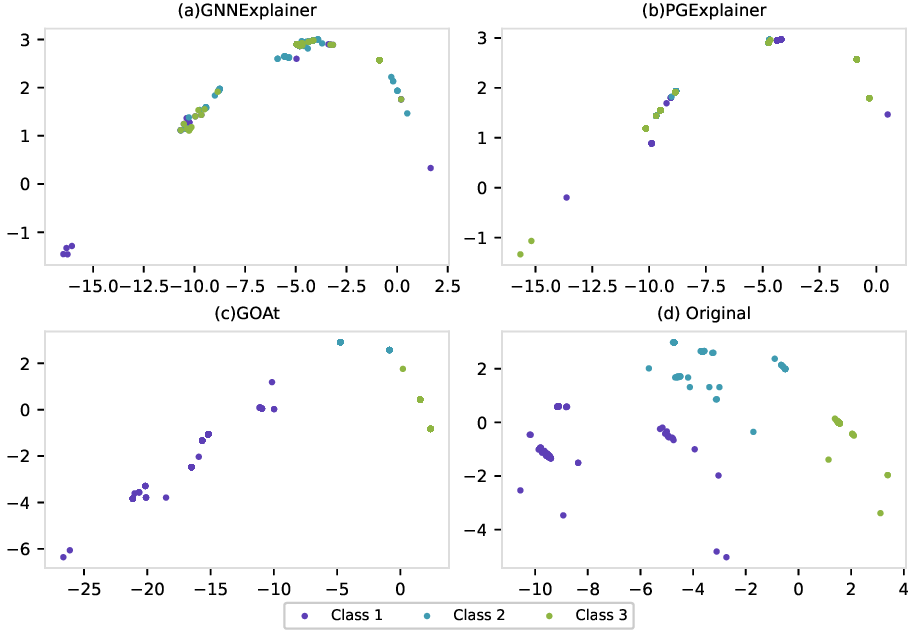}}
\vskip -0.1in
\caption{Visualization of explanation embeddings on the BA-Shapes dataset. Subfigure (d) refers to the visualization of the original embeddings by directly feeding the original data into the GNN without any modifications or explanations applied. } 
\label{fig:ba_shape_scatter}
\end{center}
\vskip -0.1in
\end{figure*}

\begin{figure*}[h]
\begin{center}
\centerline{\includegraphics[width=.9\textwidth]{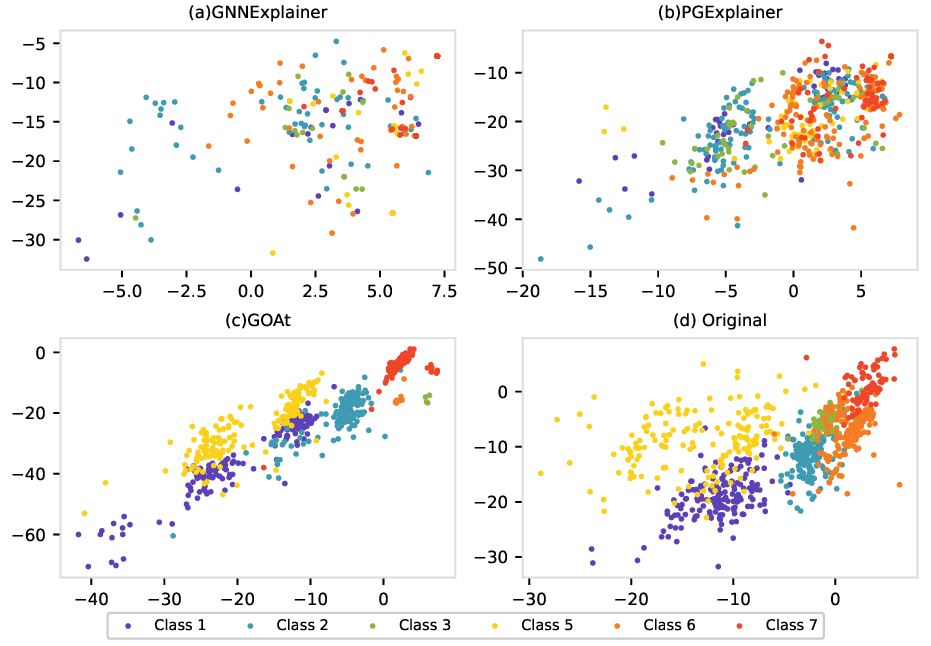}}
\vskip -0.1in
\caption{Visualization of explanation embeddings on the BA-Community dataset. Subfigure (d) refers to the visualization of the original embeddings by directly feeding the original data into the GNN without any modifications or explanations applied. } 
\label{fig:ba_community_scatter}
\end{center}
\vskip -0.1in
\end{figure*}

\begin{figure*}[h]
\begin{center}
\vskip -0.2in
\centerline{\includegraphics[width=.9\textwidth]{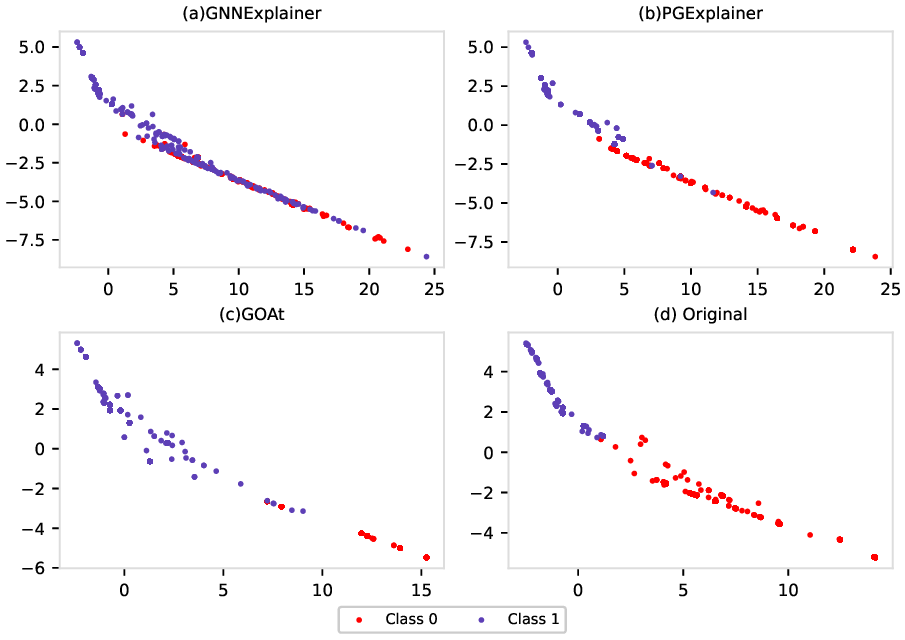}}
\vskip -0.1in
\caption{Visualization of explanation embeddings on the Tree-Grid dataset. Subfigure (d) refers to the visualization of the original embeddings by directly feeding the original data into the GNN without any modifications or explanations applied. } 
\label{fig:tree_grid_scatter}
\end{center}
\vskip -0.1in
\end{figure*}

We utilize scatter plots to visually depict the explanation embeddings produced by GNNExplainer, PGExplainer, and GOAt, and compare them with the node embeddings in the original graphs. In the generated figures, we set the value of $topk$ to be 6, 7, and 14 for the BA-Shapes, BA-Community, and Tree-Grid datasets, respectively. In the case of BA-Shapes and BA-Community, we only plot the nodes within the house-shape motif, as the other nodes are located far away and may not be easily discernible in terms of explanation performance.

As presented in Figure~\ref{fig:tree_grid_scatter}, the majority of the explanations on the Tree-Grid dataset generated by GNNExplainer are closely clustered together, and {\smodel} has fewer overlapped data points than PGExplainer. As illustrated in Figure~\ref{fig:ba_shape_scatter} and Figure~\ref{fig:ba_community_scatter}, the explanations generated by GNNExplainer and PGExplainer fail to exhibit class discrimination on BA-Shapes and BA-Communicty datasets, as all the data points are clustered together without any distinct separation. In contrast, our method, {\smodel}, generates explanations that clearly and effectively distinguish between classes, with fewer overlapping points and substantial separation distances, highlighting the strong discriminability of our approach on the node classification task. 

\textbf{Discussion on the AUC/Accuracy metrics.} Many existing GNN explanation approaches are evaluated using metrics such as AUC or Accuracy. These metrics compare the explanations generated by the explainers with "ground-truth" explanations that are predetermined by humans. Ground-truth explanations refer to the underlying evidence that leads to the correct label, rather than the prediction label itself. However, as highlighted by~\citep{faber2021comparing} there can often be a mismatch between the ground truth and the GNN. To avoid any potential misunderstandings, we have chosen to directly present scatter plots of the explanations generated by different explainers.

\section{\black{Time Analysis}} \label{app:time}

\black{Table~\ref{tab:time} presents the average running time per data sample over the six datasets used in our experiments on GCNs. We compared with the outstanding baselines, which are the perturbation-based method GNNExplainer~\citep{ying2019gnnexplainer}, decomposition-based method DEGREE~\citep{feng2022degree}, and the search-based method SubgraphX~\citep{yuan2021explainability}. Our approach is much faster than SubgraphX and GNNExplainer, and runs slightly faster than DEGREE while providing more faithful, discriminative and stable explanations as demonstrated in Section~\ref{sec:experiments}, which showcases the significance of our method. }

\begin{table}[htp]
\black{
\vskip -0.35in
\caption{Average running time per data sample.}
\label{tab:time}
\begin{center}
\scalebox{0.95}{
\small
\begin{tabular}{l|cccccc}
\toprule
& BA- & BA- & Tree- & BA- & \multirow{2}{*}{Mutagenicity} & \multirow{2}{*}{NCI1} \\
& Shapes & Community & Grid & 2Motifs & &\\
\midrule
GNNExplainer & 0.65$\pm$0.05s&0.78$\pm$0.05s&0.72$\pm$0.06s&1.16$\pm$0.10s&1.43$\pm$0.03s&1.44$\pm$0.09s\\
DEGREE &0.44$\pm$0.20s &1.02$\pm$0.35s & 0.37$\pm$0.06s & 0.58$\pm$0.11s &0.83$\pm$0.45s & 0.94$\pm$0.34s\\
SubgraphX &81.8$\pm$34.9s&138.8$\pm$38.3s&56.7$\pm$12.1s&95.4$\pm$11.7s&419.8$\pm$655.6s&541.6$\pm$357.0s\\
GOAt (ours) &0.36$\pm$0.00s&0.50$\pm$0.00s&0.05$\pm$0.00s&0.46$\pm$0.00s&0.60$\pm$0.22s&0.83$\pm$0.39s\\
\bottomrule
\end{tabular}
}
\end{center}
}
\end{table}

\end{document}